\def\eqref#1{equation~\ref{#1}}
\def\1{\bm{1}}
\DeclareMathAlphabet{\mathsfit}{\encodingdefault}{\sfdefault}{m}{sl}
\SetMathAlphabet{\mathsfit}{bold}{\encodingdefault}{\sfdefault}{bx}{n}
\theoremstyle{plain}
\newtheorem{theorem}{Theorem}[section]
\newtheorem{proposition}[theorem]{Proposition}
\newtheorem{lemma}[theorem]{Lemma}
\newtheorem{corollary}[theorem]{Corollary}
\theoremstyle{definition}
\newtheorem{definition}[theorem]{Definition}
\newtheorem{assumption}[theorem]{Assumption}
\theoremstyle{remark}
\title{Provably Efficient Off-Policy Adversarial Imitation Learning with Convergence Guarantees}
\author{\name Yilei Chen \email ylchen9@bu.edu \\
       \addr Division of Systems Engineering\\
       Boston University\\
       Boston, MA 02215, USA
       \AND
        \name Vittorio Giammarino \email vgiammar@purdue.edu \\
       \addr Department of Computer Science\\
       Purdue University\\
       West Lafayette, IN 47907, USA
       \AND
        \name James Queeney \email jqueeney@amazon.com \\
       \addr Amazon Robotics\\
       North Reading, MA 01864, USA
       \AND
       \name Ioannis Ch. Paschalidis \email yannisp@bu.edu \\
       \addr Department of Electrical and Computer Engineering\\
       Boston University\\
       Boston, MA 02215, USA}
\begin{document}

\maketitle

\begin{abstract}
{\em Adversarial Imitation Learning (AIL)} faces challenges with sample inefficiency because of its reliance on sufficient on-policy data to evaluate the performance of the current policy during reward function updates. In this work, we study the convergence properties and sample complexity of off-policy AIL algorithms. We show that, even in the absence of importance sampling correction, reusing samples generated by the $o(\sqrt{K})$ most recent policies, where $K$ is the number of iterations of policy updates and reward updates, does not undermine the convergence guarantees of this class of algorithms. Furthermore, our results indicate that the distribution shift error induced by off-policy updates is dominated by the benefits of having more data available. This result provides theoretical support for the sample efficiency of off-policy AIL algorithms that has been observed in practice.
\end{abstract}

\section{Introduction}
\label{introduction}
In {\em Imitation Learning (IL)} \citep{osa2018algorithmic}, agents do not have access to reward feedback. Instead, they rely on a set of
trajectories generated by an expert’s policy and the primary objective is to train a policy that achieves performance comparable to that policy.  {\em Adversarial Imitation Learning (AIL)} \citep{ho2016generative, fu2017learning} has emerged as a popular approach for IL. AIL frames the IL problem as a repeated two-player game. In each iteration, an adversary updates the reward function to widen the gap between expert and agent performance, while the agent updates its policy to narrow this gap.

In order to perform reward updates at every iteration, the standard AIL objective requires samples generated from the agent's current policy (i.e., on-policy data) in order to evaluate the policy's expected cumulative rewards. The use of on-policy data for reward updates has been critical for establishing convergence guarantees of AIL algorithms, but represents a main limitation of these approaches. Specifically, on-policy AIL algorithms require a significant number of new interactions between the agent and the environment at every update, precluding the use of these algorithms in settings where interactions with the environment are expensive or limited. 

To relax this on-policy requirement, off-policy methods have been proposed to enhance the sample efficiency of AIL algorithms \citep{kostrikov2018discriminator, sasaki2018sample}. These methods reuse samples from previous policies (i.e., off-policy data) during reward updates, which improves efficiency but also introduces a distribution shift error. As a result, the use of off-policy data alters the standard objective used for reward updates in on-policy AIL. Due to this change in the objective for reward updates, the theoretical properties of off-policy AIL algorithms are not well understood.

In the off-policy AIL setting, the main challenge is the data distribution shift induced by off-policy data. Instead of applying off-policy correction techniques such as importance sampling or variable transformations \citep{kostrikov2019imitation}, we are interested in answering the following question: \emph{can we guarantee the convergence of off-policy AIL by controlling the distribution shift error introduced by off-policy data?} In this work, we show this is possible by reusing samples generated by the $o(\sqrt{K})$ most recent policies when computing reward updates, where $K$ represents the total number of updates. Using off-policy data in this way, we combine off-policy projected gradient ascent reward updates with model-based mirror descent policy updates. By doing so, we make the following main contributions:

\begin{enumerate}
  \item We provide convergence guarantees for an off-policy AIL algorithm that does not require off-policy correction techniques.
  \item We provide theoretical support for the sample efficiency of our off-policy AIL algorithm. Our results shed light on how the size of the state space, action space, and horizon length can affect the choice of data during reward updates.
  \item In addition to our theoretical analysis, we demonstrate the practical performance of our algorithm through experiments on both discrete space Mini\-Grid tasks \citep{MinigridMiniworld23} and continuous space OpenAI Gym tasks \citep{1606.01540}.
\end{enumerate}


\section{Related Work}
Numerous studies have considered the theoretical properties of AIL, focusing mainly on the on-policy setting. Specifically, \citet{cai2019global} first showed the global convergence of AIL in the special linear quadratic regulator setting. \citet{chen2020computation} studied the convergence of AIL to a stationary point, as opposed to global convergence, within a general Markov Decision Process framework. \citet{guan2021will} analyzed the global convergence of AIL when paired with different policy update algorithms. \citet{xu2020error} and \citet{zhang2020gail} studied the global convergence of AIL with neural networks in the tabular and continuous case, respectively. \citet{liu2022learning} proposed AIL algorithms with global convergence guarantees with linear function approximations. \citet{shani2022online} proposed a solution via two mirror descent-based no-regret algorithms. \citet{xu2023provably} connected reward-free exploration and AIL and achieved the minimax optimal expert sample complexity and interaction complexity. Note that all of these works consider on-policy AIL methods. Compared to these works, our research focuses on providing convergence guarantees for off-policy AIL, where off-policy data are used for reward updates.

Another line of research has focused on designing off-policy AIL algorithms, and existing works have demonstrated the strong practical performance of off-policy AIL. \citet{kostrikov2018discriminator} suggested an algorithm that enhances sample efficiency by directly using off-policy data. \citet{giammarino2023adversarial} and \citet{liu2023visual} leveraged off-policy AIL as in \citet{kostrikov2018discriminator} for the problem of imitation from expert videos. \citet{sasaki2018sample} developed a method based on supervised classification of expert and agent transitions. Despite the practical performance of these algorithms, none of them provided theoretical convergence guarantees. The use of off-policy data modifies the on-policy AIL objective for reward updates, thereby forfeiting the strong theoretical guarantees of on-policy AIL methods.

The alteration of the AIL objective in the off-policy setting can be addressed by introducing {\em Importance Sampling (IS)} corrections. However, note that IS induces high variance \citep{liu2018breaking} during the policy evaluation step, which results in the need for more interactions with the environment in order to accurately evaluate the policy. Beyond importance sampling, \citet{kostrikov2019imitation} introduced an off-policy algorithm for AIL by providing a new representation of the divergence objective that avoids the use of any explicit on-policy expectations. It yields a completely off-policy objective which is the same as the original AIL objective, but this new objective is obtained by means of variable transformations \citep{nachum2019dualdice} and becomes more difficult to optimize in practice. Similar techniques were used by \citet{hoshino2022opirl} and \citet{zhu2020off} to formalize off-policy algorithms in learning from demonstrations and learning from observations settings, respectively. Different from these methods, our work focuses on establishing convergence guarantees for off-policy AIL without requiring off-policy correction techniques.

Beyond the scope of AIL, our work lies in the broader domain of imitation learning, which includes a wide range of other learning paradigms including behavioral cloning (BC) \citep{bain1995framework, ross2011reduction, daftry2016learning} and its variants with different forms of regularization to mitigate compounding errors \citep{seo2023regularized}; offline RL-based methods \citep{rashidinejad2021bridging}; preference-based imitation learning and direct occupancy-matching  methods \citep{ma2022versatile}. 
Importantly, our contribution is orthogonal to these paradigms: rather than proposing a new imitation learning algorithm to compete with existing frameworks, our work aims to bridge the gap between theory and practice in AIL by establishing the convergence guarantees and sample complexity advantages of off-policy AIL algorithms.


\section{Preliminaries}

Unless indicated otherwise, we denote the set $\{a_1,a_2,\dots,a_N\}$ by $\{a_n\}^N_{n=1}$ and $\{1,2,\dots,N\}$ by $[N]$. $\Delta(\mathcal{X})$ denotes the space of probability distributions over a set $\mathcal{X}$. $\tilde{O}(\cdot)$ hides logarithmic terms and constants. We write $\mathbb{P}(\cdot)$ for probability, $\mathbb{E}[\cdot]$ for expectation, and $\mathbb{E}_\pi[\cdot]$ for the expectation over the trajectories induced by $\pi$. Finally, we write the Total Variation (TV) distance between two distributions as $\mathbb{D}_{\textnormal{TV}}(\cdot||\cdot)$ and the Kullback-Leibler (KL) divergence between two distributions as $\mathbb{D}_{\textnormal{KL}}(\cdot||\cdot)$. 

\paragraph{Reinforcement learning.} In this work, we consider undiscounted finite-horizon Markov Decision Processes (MDPs) defined as the tuple $\{\mathcal{S},\mathcal{A},H,\{P_h\}^H_{h=1}, \nu_1, \{r_h\}^H_{h=1}\}$, where $\mathcal{S}$ is the state space, $\mathcal{A}$ the action space, $H$ is the episode horizon, $P_h(s_{h+1} \mid s_h,a_h) \in \Delta(\mathcal{S})$ is the state-transition distribution at step $h$, $\nu_1$ the initial state distribution, and $r_h(s_h,a_h) \in [0,1]$ is the bounded reward function at step $h$. To simplify exposition and avoid unnecessary technicalities, we assume $\mathcal{S}$ and $\mathcal{A}$ are finite with respective cardinalities $S$ and $A$, although our results can be easily extended to linear MDPs (cf. Appendix). A policy $\pi=\{\pi_h\}^H_{h=1}$ is a mapping from states to a probability distribution over actions $\pi:\mathcal{S}\times [H] \to \Delta(\mathcal{A})$, with $\pi_h(a_h \mid s_h)$ the probability of choosing action $a_h$ in state $s_h$ at step $h$. We consider an episode as the trajectory $\{(s_h,a_h,r_h)\}^H_{h=1}$, where $s_1\sim \nu_1$, $a_h \sim \pi_h(\cdot \mid s_h)$, and $s_{h+1}\sim P_h(\cdot \mid s_h,a_h)$.

Given a policy $\pi$, the state value function $V^\pi: \mathcal{S}\times[H]\to [0,H]$ and the state-action value function (i.e., Q function) $Q^\pi: \mathcal{S}\times\mathcal{A}\times [H] \to [0, H]$ represent the expected cumulative rewards obtained by following the policy $\pi$ from a given state and state-action pair, respectively, where we write $V_h^\pi(s) = \mathbb{E}_{\pi} \left[ \sum_{t=h}^H r_h(s_t, a_t)|s_h=s \right],\  Q_h^\pi(s,a) = \mathbb{E}_{\pi} \left[ \sum_{t=h}^H r_h(s_t, a_t)|s_h=s, a_h=a \right]$.

We define the performance of a policy $\pi$ under the reward function $r$ as $J(\pi, r)=\mathbb{E}_{s_1\sim \nu_1} \left[ V^\pi_1(s_1) \right]$. Moreover, note that $Q^\pi_h(s,a) = r_h(s,a)+\sum_{s' \in \mathcal{S}}P_h(s' \mid s,a)V^\pi_{h+1}(s')$ and $\ V^\pi_h(s) = \sum_{a \in \mathcal{A}}\pi(a \mid s) Q^\pi_h(s,a)$, where $V^\pi_{H+1}(s)=0$. Finally, we denote with $\nu^{\pi}_h(s)=\mathbb{P}(s_h=s|\nu_1,\pi,P)$ the state visitation distribution induced by the policy $\pi$ at step $h$ and with $d^{\pi}_h(s,a)=\nu^{\pi}_h(s)\pi_h(a|s)$ the state-action visitation distribution at step $h$.

\paragraph{Adversarial imitation learning.} In AIL, we assume the agent has access to a set of $N_E$ trajectories generated by the expert's policy. The objective of AIL is to learn a policy with performance comparable to the expert within a reward function space. Formally, AIL considers the minimax problem
\begin{align}
    \min_{\pi\in \Pi}\max_{\mu\in \mathcal{R}} L(\pi, \mu) = J(\pi^E, r^\mu) - J(\pi, r^\mu), 
    \label{eq:AIL_intro}
\end{align}
where $\pi^E$ represents the expert's policy, $\Pi$ is a space of feasible policies for the agent, and $\mu=\lbrace\mu_h\rbrace_{h=1}^H$ is a parameterization of the reward function $r^\mu$ in a reward function space $\mathcal{R}$. Note that we consider a tabular parameterization of the reward function, i.e., $r^{\mu} = \mu\in \mathbb{R}^{S\times A}$. See the Appendix for a generalization of our results to a linear reward parameterization.

In this work, we consider the regret of an AIL algorithm as defined in \citet{shani2022online}, which measures the difference in performance between the expert and agent throughout the learning process for the worst-case choice of the reward function in $\mathcal{R}$.

\begin{definition}[AIL Regret]
The regret of an AIL algorithm over $K$ updates is given by
\begin{equation*}
    \mathrm{Regret_{AIL}} =\sup_{\mu\in \mathcal{R}}\sum_{k=1}^K L(\pi^k, \mu),
\end{equation*}
where $\pi^k$ is the policy of the agent at update $k \in [K]$.
\end{definition}

\citet{shani2022online} showed that the regret of an AIL algorithm can be partitioned into two independent subproblems: policy updates and reward updates. This is formally stated in the following result.

\begin{lemma}[Lemma 2 in \citealt{shani2022online}]
The regret of an AIL algorithm over $K$ updates can be bounded by
\begin{align}
\mathrm{Regret_{AIL}} &\le \underbrace{\sup_{\pi\in\Pi}\sum_{k=1}^KJ(\pi, \mu^k)-J(\pi^k, \mu^k)}_{\textit{Regret of Policy Updates}}+\underbrace{\sup_{\mu\in \mathcal{R}}\sum_{k=1}^K L(\pi^k, \mu)-L(\pi^k, \mu^k)}_{\textit{Regret of Reward Updates}}. 
\end{align}
\label{lemma:shani}
\end{lemma}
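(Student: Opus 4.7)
The plan is to prove the bound by a simple add-and-subtract trick inside the supremum defining the regret, after which the two desired summands appear almost immediately; the only subtle point is that we must use the assumption (implicit in the AIL setup) that the expert policy $\pi^E$ lies in the policy class $\Pi$ (or at least that its performance is dominated by $\sup_{\pi\in\Pi}J(\pi,\mu)$ for every $\mu\in\mathcal{R}$).

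First I would start from the definition of $\mathrm{Regret_{AIL}}$ and, for an arbitrary $\mu\in\mathcal{R}$, insert the benchmark sum $\sum_{k=1}^K L(\pi^k,\mu^k)$:
\begin{equation*}
\sum_{k=1}^K L(\pi^k,\mu) \;=\; \sum_{k=1}^K\bigl[L(\pi^k,\mu)-L(\pi^k,\mu^k)\bigr] \;+\; \sum_{k=1}^K L(\pi^k,\mu^k).
\end{equation*}
Taking $\sup_{\mu\in\mathcal{R}}$ on both sides and noting that the second sum no longer depends on $\mu$, the supremum passes only through the first bracket, yielding exactly the reward-update regret as the first piece, plus the deterministic remainder $\sum_{k=1}^K L(\pi^k,\mu^k)$.

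Next I would handle the remainder by unpacking $L(\pi^k,\mu^k)=J(\pi^E,\mu^k)-J(\pi^k,\mu^k)$. Using $\pi^E\in\Pi$, for each $k$ we have $J(\pi^E,\mu^k)\le \sup_{\pi\in\Pi}J(\pi,\mu^k)$; more usefully, summing over $k$ and bounding the sum of suprema by the supremum of the sum,
\begin{equation*}
\sum_{k=1}^K\bigl[J(\pi^E,\mu^k)-J(\pi^k,\mu^k)\bigr] \;\le\; \sup_{\pi\in\Pi}\sum_{k=1}^K\bigl[J(\pi,\mu^k)-J(\pi^k,\mu^k)\bigr],
\end{equation*}
which is precisely the policy-update regret. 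Combining this with the previous display gives the claimed decomposition.

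The main obstacle, such as it is, is the assumption $\pi^E\in\Pi$: if the expert were outside the policy class, the second step would fail and one would incur an additional approximation error term $\sum_k[J(\pi^E,\mu^k)-\sup_{\pi\in\Pi}J(\pi,\mu^k)]$. Within the standard AIL framework adopted here this assumption is natural and routinely imposed, so the argument closes cleanly. The rest of the derivation is purely algebraic manipulation of suprema and telescoping, with no probabilistic or dynamical content beyond the definitions of $J$ and $L$ already in place.
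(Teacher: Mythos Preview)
Your argument is correct and is the standard add-and-subtract derivation of this decomposition. Note that the paper does not actually prove this lemma; it is quoted directly from \citet{shani2022online}, so there is no paper-side proof to compare against. One small wording slip: you write ``bounding the sum of suprema by the supremum of the sum,'' which is the wrong direction as an inequality; what you actually (and correctly) use in the displayed line is simply that $\pi^E\in\Pi$ implies $\sum_k J(\pi^E,\mu^k)\le \sup_{\pi\in\Pi}\sum_k J(\pi,\mu^k)$, with no need to pass through per-$k$ suprema at all. Your remark that $\pi^E\in\Pi$ is the one substantive assumption here is exactly right.
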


Lemma~\ref{lemma:shani} motivates the iterative nature of the AIL framework. At iteration $k$, we first update the reward function by maximizing $L(\pi^{k-1}, \mu)$ over $\mu$. To that end, notice that we need to estimate $J(\pi^E, \mu)$ and $J(\pi^{k-1},\mu)$ (cf. \eqref{eq:AIL_intro}). $J(\pi^E, \mu)$ can be approximated based on the set of expert trajectories, while evaluating $J(\pi^{k-1},\mu)$ requires new interactions with the environment by following $\pi^{k-1}$. After updating the reward function to $\mu^k$, we update the policy by maximizing $J(\pi, \mu^k)$ over $\pi$ (cf. \eqref{eq:AIL_intro}). This is equivalent to solving a standard RL problem. We summarize this standard scheme for AIL algorithms in Algorithm~\ref{alg:example}.


\paragraph{Remark.} To simplify the exposition, we assume the number of expert trajectories is infinite. Access to a finite number of expert trajectories would induce an additional statistical error, but has no impact on the focus of this work. 

\begin{algorithm}[t]
\caption{Standard AIL Scheme}
\label{alg:example}
\begin{algorithmic}
   \State \textbf{Input:} $N_E$ expert trajectories
   \State Initialize policy $\pi^0$ and reward $\mu^0$.
   \For{$k=1$ \textbf{to} $K$}
      \State Collect $B$ trajectories by following $\pi^{k-1}$.
      \State \# Reward Update
      \State Update $\mu^k$ by maximizing $L(\pi^{k-1}, \mu)$ over $\mu$.
      \State \# Policy Update
      \State Update $\pi^k$ by maximizing $J(\pi, \mu^k)$ over $\pi$.
   \EndFor
\end{algorithmic}
\end{algorithm}

\section{Off-Policy Adversarial Imitation Learning}\label{sec:results}
Based on Lemma~\ref{lemma:shani}, in order to show the convergence of AIL algorithms, we need to design algorithms that achieve sublinear regret in both subproblems: reward updates and policy updates. The policy update problem is a standard RL problem  where the reward function is available. In order to ensure sample efficiency for the RL problem, we will apply a model-based method used in previous works \citep{shani2022online, liu2022learning}. Unlike policy updates, it is not clear how to reuse past data when performing reward updates in a way that still guarantees global convergence of the AIL algorithm. The main goal of reward updates in AIL is to distinguish between expert and agent policies by maximizing the loss
\begin{equation*}
L(\pi^{k-1}, \mu) = J(\pi^E,\mu) - J(\pi^{k-1},\mu),   
\end{equation*}
where $\pi^{k-1}$ is the current policy. However, when we leverage off-policy data from the last $N$ policies in the reward update, we end up maximizing a different objective given by
\begin{equation*}
J(\pi^E, \mu) - \frac{1}{N}\sum_{n=1}^N J(\pi^{k-n}, \mu).
\end{equation*}
Therefore, we have introduced distribution shift error into the objective of our reward updates, which we must control in order to provide convergence guarantees.


The off-policy setting requires a careful balance between policy and reward updates. Small policy updates are needed to successfully control the distribution shift error in the off-policy reward update subproblem, but these updates must be large enough to guarantee sublinear regret in the policy update subproblem as well. By carefully selecting the size of policy updates and the amount of past data to use, we can enable sample reuse during reward updates while retaining global convergence guarantees. Based on this observation, we propose an off-policy AIL algorithm with convergence guarantees by combining (i)~a KL divergence regularized model-based policy update and (ii)~an off-policy projected gradient ascent reward update.

For the succinct presentation of the main idea, we leave the proofs of the main theorems and lemmas in this section to the Appendix.


\subsection{Convergent Off-Policy AIL}
\paragraph{Policy updates.} Our policy updates leverage a model-based mirror descent algorithm that has been adopted in previous works \citep{shani2022online, liu2022learning}. The algorithm consists of two steps. We first estimate the state-transition function $\{P_h\}^H_{h=1}$ by empirical state-transition encounters $\hat{P}_h(s' \mid s,a) = \frac{n_h(s,a,s')}{n_h(s,a)}$, where $n_h(s,a)$ and $n_h(s,a,s')$ count the number of visitations to state-action pair $(s,a)$ and state-action-state pair $(s,a,s')$, respectively, at step $h$. If $n_h(s,a)$ is zero, we assume that the transition function is uniform at $(s,a)$. With the estimated state-transition function, we can evaluate the Q functions by recursion from $h=H$ to $h=1$ as
\begin{align*}
    \hat{Q}^\pi_h(s,a)=&\min\{\mu_h(s,a)+b_h(s,a) +\sum_{s'\in\mathcal{S}}\hat{P}_h(s' \mid s,a)\hat{V}^\pi_{h+1}(s'), H-h+1\}, \\
    \hat{V}^\pi_h(s)=&\sum_{a\in \mathcal{A}}\hat{Q}^\pi_h(s,a)\pi_h(a|s),
\end{align*}
where $\hat{V}_{H+1}(s)=0$ and $b_h$ is an optimistic UCB-bonus to encourage exploration (refer to \citealt{cai2020provably} for more details). Second, we update the policy by maximizing $J(\pi, \mu)$ using the KL divergence regularized mirror descent algorithm
\begin{equation}
\label{eq:policy_update}
    \pi^{k}=\arg\max_{\pi\in\Pi} \lbrace\langle\nabla_\pi J(\pi^{k-1}, \mu^k), \pi-\pi^{k-1}\rangle + J(\pi^{k-1}, \mu^k) -\sigma^{-1} D(\pi, \pi^{k-1})\rbrace, 
\end{equation}
where $\sigma$ is the step size and $D(\pi, \pi^{k-1}):=\mathbb{E}_{s\sim d^{\pi^{k-1}}}[\mathbb{D}_{\textnormal{KL}}(\pi \Vert \pi^{k-1})[s]]$ is the expected KL divergence between $\pi$ and $\pi^{k-1}$. It can be shown that \eqref{eq:policy_update} has the closed-form solution
\begin{equation}
    \pi^k_h(\cdot|s)\propto \pi^{k-1}_h(\cdot|s)\cdot\exp\{\sigma\cdot\hat{Q}^{\pi^{k-1}}_h(s,\cdot)\}.
    \label{eq:closed_form}
\end{equation} 

\citet{shani2022online} showed that this algorithm achieves sublinear regret for the policy update subproblem, as described in the following result.

\begin{lemma}[Lemma 4 in \citealt{shani2022online}]
    Let $\sigma=\sqrt{2\log A/(H^2K)}$. With probability at least $1-\delta$, the regret of the policy update problem is bounded by
    \begin{align*}
        \mathrm{Regret}_{\pi} = \sup_{\pi\in \Pi}\sum_{k=1}^K J(\pi, \mu^k)-J(\pi^k, \mu^k)\le \tilde{O}(\sqrt{H^4S^2AK}).
    \end{align*}
    \label{le:policy_regret}
\end{lemma}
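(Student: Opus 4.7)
The plan is to reduce the policy-update regret to two pieces: an \emph{optimization error} from the mirror descent step and a \emph{statistical error} from using the model estimates $\hat P_h$ and the UCB-bonused value functions. I would start by applying the performance difference lemma to write, for any comparator policy $\pi$,
\[
J(\pi, \mu^k) - J(\pi^k, \mu^k) = \sum_{h=1}^H \mathbb{E}_{s \sim \nu^\pi_h}\bigl[\langle Q^{\pi^k}_h(s,\cdot), \pi_h(\cdot|s) - \pi^k_h(\cdot|s)\rangle\bigr],
\]
and then add and subtract the estimates $\hat{Q}^{\pi^{k-1}}_h$ that drive the update \eqref{eq:policy_update}. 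This splits each summand into a mirror-descent optimization term (inner product of $\hat Q^{\pi^{k-1}}$ against the policy increment) and a Q-estimation correction that absorbs both the bias $\hat Q^{\pi^{k-1}} - Q^{\pi^k}$ and the on-policy/off-policy mismatch in the index.

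For the optimization term, I would apply the state-wise three-point inequality for KL-regularized mirror descent, using $\hat Q_h \in [0,H]$:
\[
\langle \hat Q^{\pi^{k-1}}_h(s,\cdot), \pi_h(\cdot|s) - \pi^{k-1}_h(\cdot|s)\rangle \le \sigma^{-1}\bigl(\mathbb{D}_{\textnormal{KL}}(\pi_h \Vert \pi^{k-1}_h)[s] - \mathbb{D}_{\textnormal{KL}}(\pi_h \Vert \pi^{k}_h)[s]\bigr) + \tfrac{\sigma H^2}{2}.
\]
Summing over $k$ telescopes the KL terms to at most $\log A$ per state; summing over $h$ and tuning $\sigma = \sqrt{2\log A/(H^2 K)}$ gives a contribution of order $H^2\sqrt{K\log A}$, which is dominated by the statistical term.

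For the statistical term, I would exploit the optimism designed into the UCB bonus: $b_h$ is chosen so that, via Hoeffding/Bernstein concentration of $\hat P_h$ around $P_h$, on a high-probability event one has $\hat V^\pi_h(s) \ge V^\pi_h(s)$ uniformly in $\pi$. Optimism converts the $\nu^\pi$-expectation of the true advantage into a $\nu^{\pi^k}$-expectation of the sum of bonuses along the agent's own trajectories. A standard LSVI/UCB potential argument then bounds $\sum_{k,h}(n_h(s_h^k,a_h^k))^{-1/2}$ by $\tilde O(\sqrt{SAKH})$; the extra $H$ propagates recursively through $\hat Q$, producing the rate $\tilde O(\sqrt{H^4 S^2 A K})$. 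Adding the two contributions yields the claimed bound.

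The main obstacle I expect is handling two mismatches simultaneously: the Q used in the PDL expansion is $Q^{\pi^k}$ while the one driving mirror descent is $\hat Q^{\pi^{k-1}}$, and the PDL expectation is under the \emph{comparator's} visitation $\nu^\pi_h$, which is not the distribution generating any of our samples. Both issues are resolved by the optimism of $\hat V^\pi$: it lets us move from a comparator-distribution expectation to a self-normalizing bound on the agent's empirical visits, and this conversion is precisely the step that injects the $S^2 A$ factor into the final rate.
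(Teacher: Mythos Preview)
The paper does not supply its own proof of this lemma: it is quoted verbatim as ``Lemma~4 in \citet{shani2022online}'' and no argument appears in the appendix. Your sketch is essentially the standard proof used in that reference (which in turn follows \citet{cai2020provably}): performance-difference lemma, add/subtract the optimistic $\hat Q^{\pi^{k-1}}$, bound the mirror-descent piece via the KL three-point inequality with telescoping, and control the statistical piece via optimism plus a pigeonhole/elliptical-potential bound on the cumulative bonuses. So there is nothing to contrast; your plan matches the cited source.

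One minor caution: the claim that $\hat V^\pi_h(s)\ge V^\pi_h(s)$ holds \emph{uniformly in $\pi$} is stronger than what the analysis actually needs or proves. In \citet{cai2020provably} the optimism is formulated through a per-step ``model prediction error'' bound $-2b_h \le (\hat P_h - P_h) V^{\pi^{k}}_{h+1} \le 0$ on the good event, which is enough to turn the comparator-distribution term into an on-trajectory sum of bonuses without requiring uniform-in-$\pi$ optimism. This does not change your overall architecture, but you should phrase the optimism step in that more careful form when writing it out.
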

Note that the high-probability logarithm term in the above lemma is hidden in the notation $\tilde{O}$ and similarly for the subsequent lemmas. Lemma~\ref{le:policy_regret} shows that the policy update in \eqref{eq:policy_update} is aggressive enough to achieve sublinear policy regret. In the next part, we show how to achieve sublinear reward regret while using off-policy data by controlling the distribution shift error introduced by the policy update in \eqref{eq:policy_update}.

If we step outside the AIL framework, the policy updates is simply an adversarial MDP problem in the episodic setting with unknown transitions and full information (i.e., the rewards of all state-action pairs are known at each episode). There are a bunch of other algorithms can achieve sub-linear regret in this setting, including Follow-the-Regularized-Leader (FTRL) \citep{jin2021best}, Follow-the-Perturbed-Leader (FPL) \citep{wang2020refined}, and policy gradient methods \citep{he2022near}. However, in the off-policy AIL setting where we will utilize past trajectories in each reward update, policy updates must not only achieve sub-linear regret but also satisfy a stability requirement—namely, that the updated policy remains close to the one in the last step, which we will elaborate on in the following part. This condition is important to avoid excessive distribution shift. Among the available algorithms, the one we adopt—based on optimism and mirror descent—achieves the current best-known upper bound on policy update regret while also guaranteeing a bound on the deviation between consecutive policies, as established in Theorem~\ref{theo:TV-ub}.

\paragraph{Reward updates.} We update the reward parameters $\mu$ by maximizing $L(\pi, \mu)$ using projected gradient ascent. The on-policy version takes the form
\begin{align*}
    \mu^{k}_h = \mathrm{Proj}_{\mu\in \mathcal{R}}\left\{\mu^{k-1}_h + \eta \nabla_{\mu_h} L(\pi^{k-1}, \mu^{k-1})\right\},
\end{align*}
where $\eta$ is the step size. By definition, the objective function $L(\pi,\mu)$ and the state-action visitation distribution have the relationship: $L(\pi,\mu) = \sum_{h=1}^H \langle d^{\pi^E}_h - d^{\pi}_h, \mu_h \rangle,\ \nabla_{\mu_h} L(\pi,\mu) = d^{\pi^E}_h - d^{\pi}_h$.

As previously mentioned, the reward update process in the on-policy setting is not sample efficient, as it requires a significant number of new interactions between the agent and the environment to evaluate $L(\pi^{k-1},\mu^{k-1})$ at every iteration. Leveraging off-policy data is a practical way to make the reward update process more efficient. 

In order to achieve efficiency while guaranteeing convergence at the same time, we propose an off-policy algorithm that considers data from only the $N(k)$ most recent policies at round $k$. Compared to on-policy AIL which considers data sampled from $d_h^{\pi^{k-1}}$, our off-policy approach samples data from the distribution $d^{k-1,mix}_h=\sum_{n=1}^N\beta_n d^{\pi^{k-n}}_h$, where $\{\beta_n\}_{n=1}^N$ is a distribution which parameterizes the sample weights of the $N$ policies. In principle, $\{\beta_n\}_{n=1}^N$ can be any distribution and it recovers the on-policy setting when $\{\beta_n\}_{n=1}^N=\{1,0,\dots,0\}$. Throughout this work, we assume $\{\beta_n\}_{n=1}^N$ is a uniform distribution and we leave the problem of finding the optimal distribution to future works. The off-policy reward update algorithm takes the form
\begin{equation}
    \label{eq:reward_update}
    \mu^{k}_h = \mathrm{Proj}_{\mu\in \mathcal{R}}\left\{\mu^{k-1}_h + \eta \nabla_{\mu_h} L(\pi^{k-1,mix}, \mu^{k-1})\right\}
\end{equation}
where $L(\pi^{k-1,mix}, \mu) = J(\pi^E, \mu) - \frac{\sum_{n=1}^N J(\pi^{k-n}, \mu)}{N}$.

Notice that the off-policy reward update performs a gradient step on the objective $L(\pi^{k-1,mix}, \mu)$, compared to the on-policy reward update which considers the objective $L(\pi^{k-1}, \mu)$. Therefore, we must consider the distribution shift error introduced by altering the reward update objective. \citet{achiam2017constrained} proved that in infinite-horizon discounted MDPs, the difference between two state visitation distributions can be bounded by the total variation distance between the corresponding policies: $||\nu^{\pi}-\nu^{\pi'}||_1\le \frac{2\gamma}{1-\gamma}\mathbb{E}_{s\sim d^\pi}[\mathbb{D}_{\textnormal{TV}}(\pi||\pi')[s]]$, where $\gamma$ is the discount factor. In our work, we extend this result to finite-horizon undiscounted MDPs.
\begin{lemma}
\label{lemma:fhu-MDP}
    In finite-horizon undiscounted MDPs, the divergence between state-action visitation distributions is bounded by the total variation distance of the corresponding policies according to
    \begin{align*}
        ||d^{\pi}_h-d^{\pi'}_h||_1\le2\sum_{i=1}^h \mathbb{E}_{s\sim \nu^\pi_i}[\mathbb{D}_{\textnormal{TV}}(\pi_i||\pi'_i)[s]],\ \forall h\in [H].
    \end{align*}
\end{lemma}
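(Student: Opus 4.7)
The plan is to prove the bound by induction on $h$, using a standard telescoping argument that alternates between decomposing $\|d^\pi_h - d^{\pi'}_h\|_1$ into a "policy-difference" piece plus a "state-distribution-difference" piece, and then propagating the state-distribution difference through one transition step.

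First I would do the per-step decomposition. For any fixed $h$, add and subtract $\nu^\pi_h(s)\pi'_h(a|s)$ inside the sum defining $\|d^\pi_h - d^{\pi'}_h\|_1$ and apply the triangle inequality. The first resulting term equals $\sum_{s}\nu^\pi_h(s)\sum_a|\pi_h(a|s)-\pi'_h(a|s)| = 2\,\mathbb{E}_{s\sim\nu^\pi_h}[\mathbb{D}_{\textnormal{TV}}(\pi_h\|\pi'_h)[s]]$, which is exactly the extra contribution we want to accumulate at level $h$. The second term equals $\sum_s|\nu^\pi_h(s)-\nu^{\pi'}_h(s)|\sum_a\pi'_h(a|s) = \|\nu^\pi_h-\nu^{\pi'}_h\|_1$.

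Next I would show that the state-distribution gap at step $h$ is controlled by the state-action distribution gap at step $h-1$. Using the forward recursion $\nu^\pi_h(s') = \sum_{s,a}d^\pi_{h-1}(s,a)P_{h-1}(s'|s,a)$, write the difference, pull the absolute value inside the sum over $(s,a)$ by the triangle inequality, and use $\sum_{s'}P_{h-1}(s'|s,a)=1$ to obtain $\|\nu^\pi_h-\nu^{\pi'}_h\|_1\le\|d^\pi_{h-1}-d^{\pi'}_{h-1}\|_1$. Combining the two bounds yields the one-step recursion
\begin{equation*}
\|d^\pi_h - d^{\pi'}_h\|_1 \;\le\; 2\,\mathbb{E}_{s\sim\nu^\pi_h}[\mathbb{D}_{\textnormal{TV}}(\pi_h\|\pi'_h)[s]] + \|d^\pi_{h-1}-d^{\pi'}_{h-1}\|_1.
\end{equation*}

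Finally I would close the induction with the base case $h=1$: since $\nu^\pi_1 = \nu^{\pi'}_1 = \nu_1$, we get $\|d^\pi_1 - d^{\pi'}_1\|_1 = 2\,\mathbb{E}_{s\sim\nu_1}[\mathbb{D}_{\textnormal{TV}}(\pi_1\|\pi'_1)[s]]$, matching the $i=1$ term of the right-hand side. Unrolling the recursion telescopes to the claimed bound $\|d^\pi_h-d^{\pi'}_h\|_1 \le 2\sum_{i=1}^h \mathbb{E}_{s\sim\nu^\pi_i}[\mathbb{D}_{\textnormal{TV}}(\pi_i\|\pi'_i)[s]]$. I do not anticipate a serious obstacle here; the only subtlety is being careful that the expectation in each accumulated term is with respect to $\nu^\pi_i$ (not $\nu^{\pi'}_i$), which is enforced automatically by the asymmetric choice in the triangle-inequality split (adding and subtracting $\nu^\pi_h(s)\pi'_h(a|s)$ rather than the symmetric alternative).
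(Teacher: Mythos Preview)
Your proof is correct and uses essentially the same ingredients as the paper: the triangle-inequality split of $d_h^\pi(s,a)=\nu_h^\pi(s)\pi_h(a|s)$ into a policy-difference term plus a state-marginal-difference term, followed by passing the marginal difference through one transition step. The only cosmetic difference is that you run the recursion directly on $\|d^\pi_h-d^{\pi'}_h\|_1$ (via $\|\nu^\pi_h-\nu^{\pi'}_h\|_1\le\|d^\pi_{h-1}-d^{\pi'}_{h-1}\|_1$), whereas the paper first establishes a recursion on $\|\nu^\pi_h-\nu^{\pi'}_h\|_1$ and applies the $d$-to-$\nu$ split once at the end; both unroll to the same bound.
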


By Lemma~\ref{lemma:fhu-MDP}, in order to bound the off-policy data distribution shift during reward updates, the policy update algorithm should limit the total variation distance between consecutive policies. In the following result, we show that the policy update in \eqref{eq:policy_update} is conservative enough to accomplish this goal.

\begin{theorem}
\label{theo:TV-ub}
    The total variation distance between two consecutive policies induced by implementing the policy update algorithm in \eqref{eq:policy_update} is bounded by
    \begin{align*}
        \mathbb{D}_{\textnormal{TV}}(\pi^k_h||\pi^{k-1}_h)[s] \le O(AH\sigma),\ \forall s\in\mathcal{S},k\in[K],h\in[H].
    \end{align*}
    When $\sigma=\sqrt{2\log A/(H^2K)}$, the bound becomes $O(\sqrt{2A^2\log A / K})$.
\end{theorem}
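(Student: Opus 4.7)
The plan is to work directly from the closed-form expression \eqref{eq:closed_form} and control the pointwise ratio between the consecutive policies. First, I would verify that $\hat{Q}^{\pi^{k-1}}_h(s,a) \in [0, H]$ uniformly in $(s,a,k,h)$: the upper bound comes from the explicit truncation at $H-h+1 \le H$ in the recursive definition of $\hat{Q}$, while nonnegativity follows because the rewards $\mu_h$ lie in $[0,1]$, the UCB bonus $b_h$ is nonnegative, and $\hat{V}_{H+1}\equiv 0$, which by backward induction keeps $\hat{V}$ and hence $\hat{Q}$ nonnegative. Consequently $\exp\{\sigma \hat{Q}^{\pi^{k-1}}_h(s,a)\} \in [1, e^{\sigma H}]$ for every argument.

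Next, I would introduce the normalizing constant $Z(s,h) := \sum_{a'} \pi^{k-1}_h(a'|s)\exp\{\sigma \hat{Q}^{\pi^{k-1}}_h(s,a')\}$, so that \eqref{eq:closed_form} gives
\[
\pi^k_h(a|s) - \pi^{k-1}_h(a|s) = \pi^{k-1}_h(a|s)\left(\frac{\exp\{\sigma \hat{Q}^{\pi^{k-1}}_h(s,a)\}}{Z(s,h)}-1\right).
\]
Since $Z(s,h)$ is a convex combination of terms lying in $[1, e^{\sigma H}]$, it also lies in that interval, yielding the pointwise bound $|\pi^k_h(a|s) - \pi^{k-1}_h(a|s)| \le \pi^{k-1}_h(a|s)\,(e^{\sigma H}-1) \le e^{\sigma H}-1$.

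To finish, I would convert this $\ell_\infty$ control into an $\ell_1$ bound via $\|\cdot\|_1 \le A\|\cdot\|_\infty$, producing $2\,\mathbb{D}_{\textnormal{TV}}(\pi^k_h \Vert \pi^{k-1}_h)[s] \le A(e^{\sigma H}-1)$, and then apply the elementary estimate $e^x - 1 \le 2x$ for $x\in[0,1]$, which is valid for the prescribed $\sigma$ once $K$ is sufficiently large, since $\sigma H = \sqrt{2\log A/K} \to 0$. This delivers the stated $O(AH\sigma)$ bound, and a direct substitution of $\sigma = \sqrt{2\log A/(H^2K)}$ then produces $O(\sqrt{2A^2\log A/K})$. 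The only genuinely delicate point I anticipate is the nonnegativity and truncation verification for $\hat{Q}^{\pi^{k-1}}_h$, as the explicit $\min\{\cdot,H-h+1\}$ clipping is what guarantees the two-sided control on the exponential factor; once this is pinned down, the remainder is a standard softmax-update stability computation. One could alternatively sum the pointwise bound over actions directly and recover the sharper $O(H\sigma)$ estimate, but the $\ell_\infty$-to-$\ell_1$ route suffices for the statement as phrased.
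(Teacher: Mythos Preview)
Your proposal is correct and takes a somewhat different route from the paper. Both arguments hinge on the same two observations: $\hat{Q}^{\pi^{k-1}}_h(s,a)\in[0,H]$ (from truncation and nonnegativity) and the normalizer lies in $[1,e^{\sigma H}]$. Where they diverge is in converting the multiplicative update into an additive bound. The paper takes logarithms of \eqref{eq:closed_form}, applies the mean value theorem to write $\pi^k_h(a|s)-\pi^{k-1}_h(a|s)=y\bigl(\sigma\hat{Q}^{\pi^{k-1}}_h(s,a)-\log T_h(s)\bigr)$ for some $y\in(0,1)$, and then bounds the right-hand side termwise by $2\sigma H$ before summing over $a$. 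You instead bound the ratio $e^{\sigma\hat{Q}}/Z$ directly in $[e^{-\sigma H},e^{\sigma H}]$, obtain the pointwise estimate $|\pi^k_h(a|s)-\pi^{k-1}_h(a|s)|\le\pi^{k-1}_h(a|s)(e^{\sigma H}-1)$, and linearize $e^{\sigma H}-1$. Your route is arguably more elementary (no mean value theorem), and as you note in your final remark, summing the pointwise bound over $a$ using $\sum_a\pi^{k-1}_h(a|s)=1$ actually yields $\mathbb{D}_{\textnormal{TV}}\le \tfrac{1}{2}(e^{\sigma H}-1)=O(H\sigma)$, a factor of $A$ sharper than the stated theorem; the paper's MVT step discards the factor $y$ and so does not immediately recover this improvement.
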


Based on Lemma~\ref{lemma:fhu-MDP} and Theorem~\ref{theo:TV-ub}, we can show that the reward update in \eqref{eq:reward_update} achieves sublinear regret for the reward update problem, despite altering the objective function.

\begin{theorem}
Let $\eta=\sqrt{SA/K}$ and $\sigma=\sqrt{2\log A/(H^2K)}$. Then, with probability at least $1-\delta$, the
regret of the reward update problem is bounded by
\begin{align*}
    Regret_\mu &= \sup_{\mu\in \mathcal{R}}\sum_{k=1}^KL(\pi^k,\mu)-L(\pi^k,\mu^k) \\
    &\le \tilde{O}\big(\underbrace{\sqrt{H^2SAK}}_{\textit{regret of reward updates}} +\underbrace{\sqrt{\frac{H^4A^2(\sum_k(N(k)-1))^2}{K}}}_{\textit{error induced by off-policy updates}}+\underbrace{\sqrt{\frac{H^3SAK^2}{B\sum_k N(k)}}}_{\textit{estimation error}} \big),
\end{align*}
where $B$ is the number of new trajectories collected with the current policy between updates. Specifically, if $N$ is fixed, we have
\begin{align*}
    Regret_\mu \le \tilde{O}\big(\underbrace{\sqrt{H^2SAK}}_{\textit{regret of reward updates}}+\underbrace{\sqrt{H^4A^2(N-1)^2K}}_{\textit{error induced by off-policy updates}}+\underbrace{\sqrt{H^3SAK/(BN)}}_{\textit{estimation error}} \big).
\end{align*}
\label{theo:reward_regret}
\end{theorem}

\paragraph{Main result.} Based on the regret bound for the policy update problem in Lemma~\ref{le:policy_regret} and the regret bound for the reward update problem in Theorem~\ref{theo:reward_regret}, we are now ready to state our main theoretical result for the off-policy AIL algorithm.
\begin{theorem}
    In the off-policy AIL algorithm based on the policy update in \eqref{eq:policy_update} and the reward update in \eqref{eq:reward_update}, let $\sigma=\sqrt{2\log A/(H^2K)}$ and $\eta=\sqrt{SA/K}$. Then, with probability at least $1-\delta$, it holds that
    \begin{align*}
        Regret_{AIL}\le \tilde{O}\big( &\underbrace{\sqrt{H^4S^2AK}}_{\textit{regret of policy updates}}
        +\underbrace{\sqrt{H^2SAK}}_{\textit{regret of reward updates}} \\
        +&\underbrace{\sqrt{H^4A^2(N-1)^2K}}_{\textit{error induced by off-policy updates}}
        +\underbrace{\sqrt{H^3SAK/(BN)}}_{\textit{estimation error}}  \big) .
    \end{align*}
    \label{theo:main}
\end{theorem}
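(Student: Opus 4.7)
The plan is to combine the three building blocks already established in the excerpt: the regret decomposition in Lemma~\ref{lemma:shani}, the policy regret bound in Lemma~\ref{le:policy_regret}, and the reward regret bound in Theorem~\ref{theo:reward_regret}. Since the AIL regret is, by definition, $\sup_{\mu\in\mathcal{R}}\sum_{k=1}^K L(\pi^k,\mu)$, Lemma~\ref{lemma:shani} upper-bounds it by the sum of $\mathrm{Regret}_\pi$ and $\mathrm{Regret}_\mu$. Because these two quantities are the objectives separately controlled by the model-based mirror descent policy update in \eqref{eq:policy_update} and the off-policy projected gradient ascent reward update in \eqref{eq:reward_update}, all that remains is to plug in the stated choices $\sigma=\sqrt{2\log A/(H^2K)}$ and $\eta=1/\sqrt{K}$, which are consistent with the hypotheses of both lemmas.

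Concretely, I would proceed as follows. First, invoke Lemma~\ref{lemma:shani} to write $\mathrm{Regret}_{AIL}\le \mathrm{Regret}_\pi+\mathrm{Regret}_\mu$. Second, apply Lemma~\ref{le:policy_regret} with step size $\sigma=\sqrt{2\log A/(H^2K)}$ to obtain, on an event $\mathcal{E}_\pi$ of probability at least $1-\delta/2$, the bound $\mathrm{Regret}_\pi \le \tilde O(\sqrt{H^4S^2AK})$. Third, apply Theorem~\ref{theo:reward_regret} with $\eta=1/\sqrt{K}$ (note that its statement already requires the same $\sigma$ in order to invoke Theorem~\ref{theo:TV-ub} and Lemma~\ref{lemma:fhu-MDP} when controlling the distribution shift across the $N$ most recent policies), to obtain on an event $\mathcal{E}_\mu$ of probability at least $1-\delta/2$ the three-term bound
\begin{equation*}
\mathrm{Regret}_\mu\le\tilde O\bigl(\sqrt{H^2SAK}+\sqrt{H^4A^2(N-1)^2K}+\sqrt{H^3SAK/(BN)}\bigr).
\end{equation*}
Fourth, take a union bound over $\mathcal{E}_\pi\cap\mathcal{E}_\mu$, which holds with probability at least $1-\delta$, and add the two regret estimates term by term; the $\log(1/\delta)$ factors picked up by rescaling $\delta\mapsto\delta/2$ are absorbed into $\tilde O(\cdot)$.

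The proof is essentially a bookkeeping exercise, and I do not anticipate a serious technical obstacle beyond checking that the two choices of step sizes $\sigma$ and $\eta$ are mutually compatible and that the high-probability events underlying Lemma~\ref{le:policy_regret} (the model estimation concentration for $\hat P_h$ used inside the UCB-bonus construction) and Theorem~\ref{theo:reward_regret} (the concentration of $\nabla_{\mu_h}L(\pi^{k-1,mix},\mu^{k-1})$ around its expectation, which governs the $\sqrt{H^3SAK/(BN)}$ term) are defined over the same sample path and can be intersected cleanly. If the estimation events turn out to share randomness — for example, the same empirical transitions feed both the $\hat Q^{\pi^{k-1}}$ used in \eqref{eq:closed_form} and the Monte Carlo estimate of the mixture visitation distribution — then the union bound is even tighter and no further work is needed; otherwise an additional $\log$ factor inside $\tilde O(\cdot)$ is the only cost.
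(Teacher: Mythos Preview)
Your proposal is correct and follows essentially the same approach as the paper: the paper's proof is a one-line remark that the result follows by plugging Lemma~\ref{le:policy_regret} and Theorem~\ref{theo:reward_regret} into the decomposition of Lemma~\ref{lemma:shani}. Your additional care with the union bound and the compatibility of the step sizes $\sigma$ and $\eta$ is more explicit than the paper's treatment but changes nothing substantive.
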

\begin{proof}
    It can be derived by plugging Lemma~\ref{le:policy_regret} and Theorem~\ref{theo:reward_regret} into the regret of policy updates and reward updates, respectively, in Lemma~\ref{lemma:shani}.
\end{proof}

Based on Theorem~\ref{theo:main}, we can directly derive the following two propositions.
\begin{proposition}
    \label{prop:best_n}
    In the case where $N$ is fixed, the regret bound in Theorem~\ref{theo:reward_regret} is optimized when we update reward parameters using data from the $N^*=\tilde{O}(\frac{S}{HAB})^{1/3}$ most recent policies.
\end{proposition}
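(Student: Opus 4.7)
The plan is to optimize the upper bound of Theorem~\ref{theo:main} as a function of $N$ with all other quantities ($K$, $H$, $S$, $A$, $B$) held fixed. Of the four terms in the bound, the policy-update regret $\sqrt{H^4 S^2 A K}$ and the reward-update regret $\sqrt{H^2 S A K}$ are independent of $N$ and contribute only a constant shift to the objective. The remaining two terms pull in opposite directions: the off-policy error $\sqrt{H^4 A^2 (N-1)^2 K} = H^2 A (N-1) \sqrt{K}$ is strictly increasing in $N$, while the estimation error $\sqrt{H^3 S A K / (BN)}$ is strictly decreasing in $N$. So the optimization reduces to a one-dimensional problem of balancing these two terms.

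The second step is a standard calculus argument. Approximating $N-1 \approx N$ (the difference is absorbed into the $\tilde{O}$ notation once $N \ge 2$), I would minimize
\[
f(N) = H^2 A\, N \sqrt{K} + \sqrt{\frac{H^3 S A K}{B N}}
\]
over $N > 0$. Setting $\partial f/\partial N = 0$ gives
\[
H^2 A \sqrt{K} \;=\; \tfrac{1}{2}\sqrt{\frac{H^3 S A K}{B}}\; N^{-3/2},
\]
which rearranges to
\[
N^{3/2} \;=\; \tfrac{1}{2}\sqrt{\frac{S}{HAB}}, \qquad \text{i.e.,} \qquad N^\ast \;=\; \Theta\!\left(\!\left(\frac{S}{HAB}\right)^{1/3}\right),
\]
matching the proposition up to absolute constants and logarithmic factors hidden in $\tilde{O}$. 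Convexity of $f$ on $(0,\infty)$ (the first term is linear in $N$ and the second is proportional to $N^{-1/2}$, which is convex) confirms that this critical point is the global minimum of the continuous relaxation; rounding to an integer $N^\ast \ge 1$ loses at most a constant factor. Plugging $N^\ast$ back into $f$ then gives the balanced contribution of order $\tilde{O}\bigl(\sqrt{H^4 A^2 K} \cdot (S/(HAB))^{1/3}\bigr)$, which can be reported as a corollary if desired.

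The only real obstacle is bookkeeping rather than mathematical difficulty. One must verify that the approximation $N-1 \approx N$ is harmless (it merely changes a constant that the $\tilde{O}$ swallows) and handle the boundary case $S \lesssim HAB$, where the formula yields $N^\ast < 1$ and the optimum is attained at $N^\ast = 1$, i.e., the on-policy setting; this case is consistent with the statement because $\tilde{O}((S/(HAB))^{1/3}) = \tilde{O}(1)$ in that regime. In particular, this explains why the off-policy reuse scheme only becomes theoretically advantageous when the state space is large relative to the horizon, action cardinality, and per-iteration batch size.
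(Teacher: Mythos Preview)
Your proposal is correct and matches the paper's intended approach: the paper does not spell out a proof but simply says the proposition ``can be directly derived'' from Theorem~\ref{theo:main}, which is exactly the balancing-of-terms calculus argument you carry out. Your handling of the $N-1\approx N$ approximation, the convexity check, and the boundary case $N^\ast=1$ are all appropriate additions that the paper leaves implicit.
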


\begin{proposition}
    \label{prop:convergence_requirement}
    The regret bound in Theorem~\ref{theo:reward_regret} is sublinear with respect to $K$ when we update reward parameters using data from the $N(k)=o(\sqrt{K}), \forall k\in [K]$ most recent policies, including the case where $N$ is a fixed number.
\end{proposition}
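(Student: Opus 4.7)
The plan is straightforward: take the regret bound from Theorem~\ref{theo:main} at face value and check, term by term, what growth rate of $N$ as a function of $K$ keeps every summand $o(K)$. Since sublinearity in $K$ is equivalent to each of the four terms being $o(K)$, this reduces to an asymptotic comparison of $\sqrt{K}$-type quantities.

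Concretely, I would first dispose of the three ``easy'' terms. The policy update regret $\sqrt{H^4 S^2 A K}$ and the reward update regret $\sqrt{H^2 S A K}$ are both $\tilde{O}(\sqrt{K})$, which is automatically $o(K)$ independently of $N$. The estimation error term $\sqrt{H^3 S A K / (BN)}$ is also $O(\sqrt{K})$ for any fixed $N \ge 1$ and any fixed batch size $B \ge 1$, hence sublinear in $K$. These three observations absorb half of the bound without imposing any constraint on $N$.

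The only nontrivial contribution is the off-policy error $\sqrt{H^4 A^2 (N-1)^2 K}$, which I would rewrite as $H^2 A (N-1)\sqrt{K}$ to isolate the dependence on $N$ and $K$. Demanding $H^2 A (N-1)\sqrt{K} = o(K)$ with $H$ and $A$ treated as fixed reduces to $(N-1)\sqrt{K} = o(K)$, i.e., $N-1 = o(\sqrt{K})$, which is equivalent to $N = o(\sqrt{K})$. This is precisely the stated threshold, and conversely if $N = \Omega(\sqrt{K})$ this term is $\Omega(K)$, so the condition is also tight within this bound.

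I do not anticipate any real obstacle: the proposition is a direct asymptotic reading of Theorem~\ref{theo:main}, and no new probabilistic or dynamical argument is required. All the difficulty has already been absorbed into controlling the off-policy distribution shift via Lemma~\ref{lemma:fhu-MDP} and Theorem~\ref{theo:TV-ub}; what remains here is purely algebraic bookkeeping to identify which term is binding and to express the corresponding condition on $N$.
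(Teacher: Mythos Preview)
Your proposal is correct and matches the paper's approach: the paper does not give a standalone proof of this proposition, stating only that it can be ``directly derived'' from Theorem~\ref{theo:main}, which is precisely the term-by-term asymptotic check you outline. Your identification of the off-policy error term $H^2 A (N-1)\sqrt{K}$ as the sole binding constraint, yielding $N = o(\sqrt{K})$, is exactly the intended reading.
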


On-policy AIL (i.e., AIL using on-policy reward updates) is a special case in our analysis where $N=1$. Proposition~\ref{prop:best_n} shows that there is an optimal scaling for the number of recent policies to consider during the reward updates based on the worst-case regret. In the cases where $S$ is dominant, the off-policy AIL algorithm is expected to have better performance by reusing data from prior policies. In the cases where $H$ and $A$ are dominant, it is better to only consider data from the current policy during reward updates. 

\subsection{Sample Efficient Off-Policy AIL}
In this section, we illustrate why our off-policy AIL algorithm can be more sample efficient than the on-policy AIL algorithm.

The set of feasible state-action visitation distributions is defined as
$\mathcal{D}=\{d=\{d_h\}_{h=1}^H:d\ge 0,\ \sum_{a \in \mathcal{A}} d_1(s,a)=\nu_1(s),\ \sum_{a \in \mathcal{A}} d_h(s,a)=\sum_{s' \in \mathcal{S},a \in \mathcal{A}}P(s|s',a)d_{h-1}(s',a),\ \forall s\in S,\ h=2,\dots,H\}$. 
For any policy $\pi\in\Pi$, we can find a distribution $d^*\in \mathcal{D}$ such that $d^*=d^\pi$ and vice versa.

\begin{theorem}\label{thm:1to1}
    The set of feasible state-action visitation distributions $\mathcal{D}$ is convex, and there is a one-to-one correspondence between $\Pi$ and $\mathcal{D}$.
\end{theorem}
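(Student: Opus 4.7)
\textbf{Proof proposal for Theorem \ref{thm:1to1}.}

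The plan is to handle the two claims separately: first convexity of $\mathcal{D}$, which follows from its description as a polytope, and then the bijection $\Pi \leftrightarrow \mathcal{D}$ via an explicit inverse map. For convexity, I would observe that every defining constraint of $\mathcal{D}$, namely the non-negativity $d_h(s,a)\ge 0$, the initial-step marginal condition $\sum_a d_1(s,a)=\nu_1(s)$, and the Bellman flow equation $\sum_a d_h(s,a)=\sum_{s',a}P_{h-1}(s\mid s',a)d_{h-1}(s',a)$ for $h\ge 2$, is linear in $d$. Hence $\mathcal{D}$ is the intersection of finitely many halfspaces and hyperplanes, and so it is convex (in fact, a polytope).

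For the bijection, I would construct the forward map $\Phi:\Pi\to\mathcal{D}$ and a candidate inverse $\Psi:\mathcal{D}\to\Pi$, then verify that they compose to the identity on both sides. Define $\Phi(\pi):=d^\pi$, where $d^\pi_h(s,a)=\nu^\pi_h(s)\pi_h(a\mid s)$. A quick induction on $h$ using the transition kernel shows that $\Phi(\pi)$ satisfies the flow equations, so $\Phi(\pi)\in\mathcal{D}$. For the inverse direction, given $d\in\mathcal{D}$, let $\bar d_h(s):=\sum_a d_h(s,a)$ and define
\begin{equation*}
\pi^d_h(a\mid s)=\begin{cases}\dfrac{d_h(s,a)}{\bar d_h(s)} & \text{if }\bar d_h(s)>0,\\[4pt]\text{any fixed distribution in }\Delta(\mathcal{A}) & \text{otherwise,}\end{cases}
\end{equation*}
and set $\Psi(d):=\pi^d$. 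By construction $d_h(s,a)=\bar d_h(s)\pi^d_h(a\mid s)$ whenever $\bar d_h(s)>0$, and trivially when $\bar d_h(s)=0$. An induction on $h$ then yields $\nu^{\pi^d}_h(s)=\bar d_h(s)$: the base case $h=1$ uses the initial-distribution constraint, and the inductive step uses the flow equation combined with the identity $d_{h-1}(s',a)=\bar d_{h-1}(s')\pi^d_{h-1}(a\mid s')$. Consequently $d^{\pi^d}=d$, so $\Phi\circ\Psi=\mathrm{id}_\mathcal{D}$. Conversely, for any $\pi$, on every state $s$ with $\nu^\pi_h(s)>0$ the definition of $\Psi$ applied to $d^\pi$ recovers $\pi_h(a\mid s)$ exactly, so $\Psi\circ\Phi=\mathrm{id}_\Pi$ once we identify two policies that agree on all reachable states (the standard convention, since behaviour at unreachable states has no effect on $d^\pi$).

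The main technical obstacle I anticipate is precisely this last point: the bijection is only literal if we treat $\Pi$ modulo the equivalence that ignores behaviour on states with zero marginal. I would make this explicit in a brief remark, noting that one can either restrict to the equivalence class or fix a canonical choice (e.g.\ uniform) on unreached states, after which $\Phi$ and $\Psi$ are genuine inverses. Aside from that, every step reduces to an induction on the horizon or to a direct reading of the linear defining constraints, so no delicate estimation is required.
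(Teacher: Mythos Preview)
Your proposal is correct and follows essentially the same route as the paper: convexity from linearity of the defining constraints, and the bijection via the inverse map $\pi_h(a\mid s)=d_h(s,a)/\sum_{a'}d_h(s,a')$, with the caveat that policies are identified on states of zero marginal. The paper handles this last point by a without-loss-of-generality reduction to $\bar d_h(s)>0$ rather than your explicit canonical choice, but the argument is otherwise the same.
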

    

In particular, Theorem~\ref{thm:1to1} demonstrates that we can interpret our use of off-policy data as samples from a single policy.

\begin{corollary}\label{cor:pimix}
    If $\{d^n\}_{n=1}^N$ is a set of state-action visitation distributions of $N$ policies, define the mixture distribution as $d^{mix} = \sum_{n=1}^N\beta_n d^n$, where $\{\beta_n\}_{n=1}^N$ is a distribution of weights over these policies. Then, there exists a single policy $\pi^*$ such that sampling data from $d^{mix}$ is equivalent to sampling from the state-action visitation distribution $d^{\pi^*}$ of $\pi^*$. 
\end{corollary}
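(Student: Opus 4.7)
The plan is to deduce this corollary almost immediately from Theorem~\ref{thm:1to1}. First I would note that each $d^n$, being the state-action visitation distribution of some policy, lies in the feasible set $\mathcal{D}$. Since $\{\beta_n\}_{n=1}^N$ is a probability distribution (nonnegative and summing to one), the mixture $d^{mix}=\sum_{n=1}^N \beta_n d^n$ is a convex combination of elements of $\mathcal{D}$. Invoking the convexity of $\mathcal{D}$ from Theorem~\ref{thm:1to1}, we conclude that $d^{mix}\in\mathcal{D}$. The one-to-one correspondence between $\Pi$ and $\mathcal{D}$ in the same theorem then yields a (unique) policy $\pi^*\in\Pi$ satisfying $d^{\pi^*}=d^{mix}$, and sampling from $d^{mix}$ is by definition equivalent to sampling from $d^{\pi^*}$.

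For concreteness, I would also exhibit $\pi^*$ explicitly. Let $\nu^{mix}_h(s)=\sum_{a\in\mathcal{A}} d^{mix}_h(s,a)$ and define
\begin{equation*}
\pi^*_h(a\mid s)=\frac{d^{mix}_h(s,a)}{\nu^{mix}_h(s)}\quad\text{whenever }\nu^{mix}_h(s)>0,
\end{equation*}
and set $\pi^*_h(\cdot\mid s)$ arbitrarily otherwise (this choice is immaterial because such states are unreachable). Then I would verify that $d^{\pi^*}_h = d^{mix}_h$ for every $h$ by induction, using the marginalization and flow constraints in the definition of $\mathcal{D}$: the initial-time constraint $\sum_a d^{mix}_1(s,a)=\nu_1(s)$ gives the base case, and the Bellman-flow constraint propagates the equality from step $h-1$ to step $h$ combined with $d^{\pi^*}_h(s,a)=\nu^{\pi^*}_h(s)\pi^*_h(a\mid s)$.

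There is essentially no hard step here; the corollary is a direct packaging of convexity plus the bijection of Theorem~\ref{thm:1to1}. The only minor subtlety is handling states $s$ with $\nu^{mix}_h(s)=0$, where the conditional $\pi^*_h(\cdot\mid s)$ is not pinned down by $d^{mix}$; this is harmless since such states carry zero probability under any realization of the mixture, so the equivalence of sampling is unaffected.
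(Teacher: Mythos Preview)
Your proposal is correct and follows essentially the same approach as the paper: invoke convexity of $\mathcal{D}$ to place $d^{mix}$ in $\mathcal{D}$, then use the bijection from Theorem~\ref{thm:1to1} to obtain $\pi^*$. The paper's proof is just your first paragraph in two sentences; your explicit construction of $\pi^*$ and the inductive verification are additional (correct) detail that the paper omits.
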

\begin{proof}
    By the convexity of $\mathcal{D}$, the mixture distribution $d^{mix}\in \mathcal{D}$. By the one-to-one correspondence between $\mathcal{D}$ and $\Pi$, there exists a policy $\pi^*\in\Pi$ such that $d^{\pi^*}=d^{mix}$. 
\end{proof}

Corollary~\ref{cor:pimix} tells us that during the off-policy reward updates, we can interpret our data as being generated from a single policy. Suppose we collect the same amount of new data between updates. Then, the off-policy algorithm will have $N$ times as much data compared to the on-policy algorithm to evaluate that single policy. Therefore, the off-policy algorithm can achieve better estimation error. Alternatively, to achieve a given estimation error, the off-policy algorithm has to collect less data at every iteration compared to the on-policy algorithm, which allows for more frequent updates.

In most cases, especially real world tasks, the size of the state space $S$ dominates other parameters such as the horizon length $H$ and size of the action space $A$. In these settings, the off-policy error is dominated by the estimation error. Denote the number of trajectories collected by following the current policy $B_{\text{on}}$ and $B_{\text{off}}$ in on-policy and off-policy algorithms, respectively. To achieve a similar regret, the on-policy algorithm needs to collect $N$ times as many trajectories as the off-policy algorithm (i.e., $B_{\text{on}}\approx N \cdot B_{\text{off}}$). Usually, a complete trajectory consists of thousands of interactions between the agent and the environment. Therefore, the off-policy AIL algorithm can gain significant sample efficiency.


\section{Experiments}
In addition to our theoretical analysis, we also implement our off-policy AIL algorithm with different $N$ in both discrete space MiniGrid environments \citep{MinigridMiniworld23} and continuous space OpenAI Gym MuJoCo benchmarks \citep{1606.01540}. We compare the performance of the off-policy AIL algorithm to the on-policy version (i.e., $N=1$). The on-policy algorithm considers the same policy updates as the off-policy algorithm, but only uses on-policy data for reward updates. Note that we do not compare our algorithm with state-of-the-art off-policy AIL algorithms for two reasons. First, the algorithm introduced in our work is a simplified version of popular off-policy AIL algorithms applied in practice, which allows for rigorous theoretical analysis. Second, the goal of our work is to provide theoretical support for the strong performance of these popular off-policy AIL algorithms, not to achieve state-of-the-art performance.

\subsection{MiniGrid Environments}
\begin{figure*}[t]
\vskip 0.2in
  \begin{center}
  \subfigure[EmptyRoom-3$\times$3]{\includegraphics[width=0.32\textwidth]{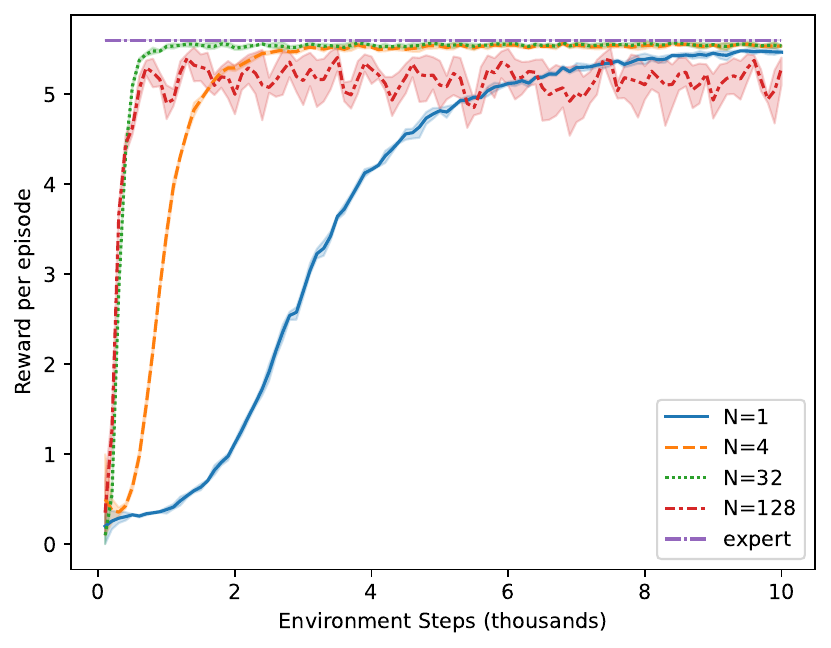}\label{fig:maze_3}}
  \subfigure[EmptyRoom-5$\times$5]{\includegraphics[width=0.32\textwidth]{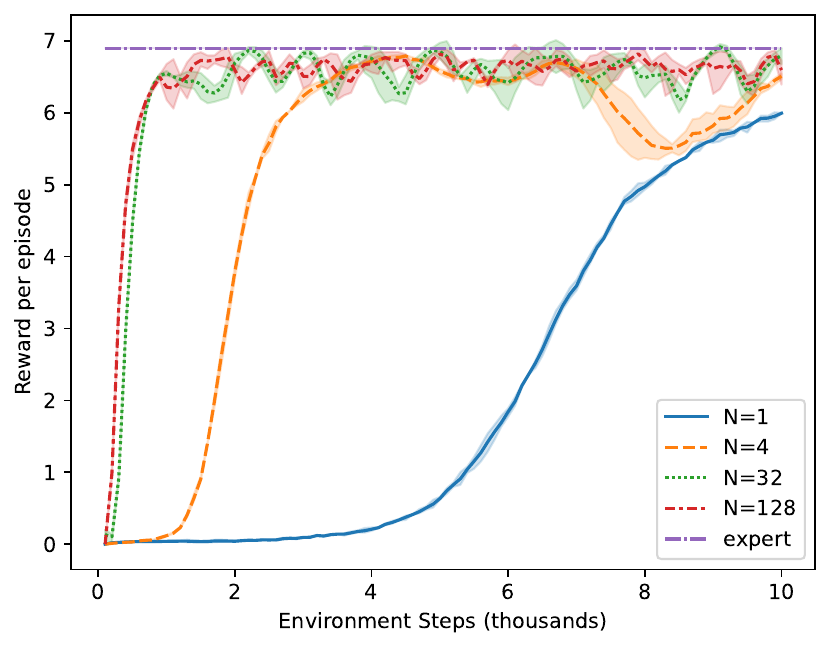}\label{fig:maze_5}}
  \subfigure[EmptyRoom-9$\times$9]{\includegraphics[width=0.32\textwidth]{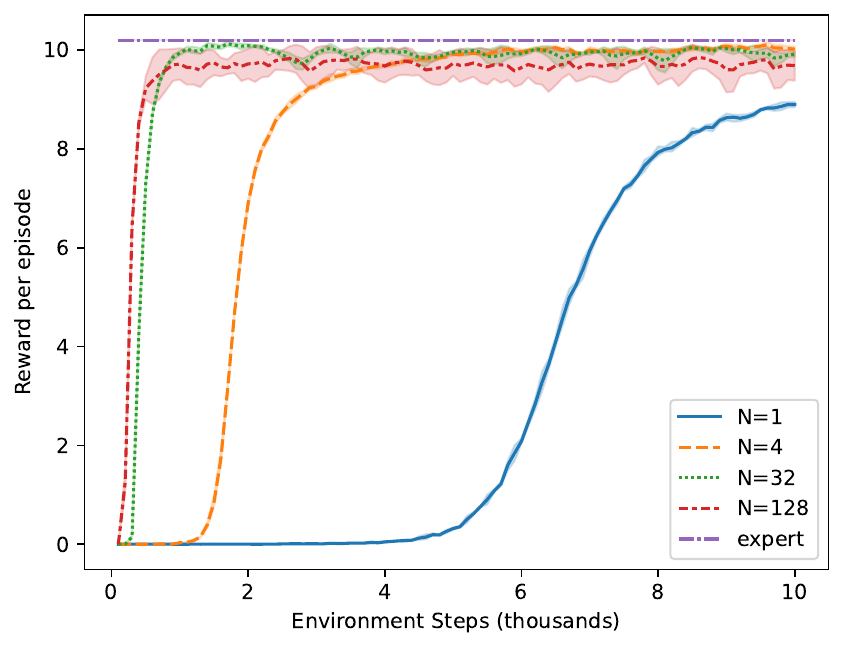}\label{fig:maze_9}}
  \caption{Experimental results for our off-policy AIL algorithm with different $N$ in three MiniGrid EmptyRoom tasks with room sizes equal to $3\times 3$, $5\times 5$, and $9\times 9$, respectively, from left to right. Training curves represent total reward per episode as a function of environment interactions. We evaluate the learned policy using average performance over $5$ episodes. $N$ denotes the number of most recent policies we consider during reward updates, where $N = 1$ represents the on-policy algorithm. The expert’s
  demonstration consists of $4$ trajectories which are hand-crafted. We run each experiment for $5$ different seeds and the shading represents the standard deviation. For more implementation details, please refer to Section~\ref{sec:minigrid_details} in the Appendix.}
  \label{fig:minigrid}
  \end{center}
  \vskip -0.2in
\end{figure*}
We begin by analyzing the behavior of the off-policy AIL algorithm in a discrete space MiniGrid environment named EmptyRoom. Within the EmptyRoom environment, the agent undertakes a navigation task with the primary objective of reaching a designated destination while minimizing the number of steps taken. Specifically, the environment is a $n\times n$ grid world with a state space $\mathcal{S}=\{(i, j)\}^n_{i,j=1}$, an action space $\mathcal{A} = \{\text{stay, up, down, left, right}\}$, and a horizon $H = 3n$. The agent starts from $(1, 1)$ and the destination is $(n, n)$. The agent receives a reward of $1$ when in the position $(n, n)$, otherwise receives a reward of $-0.1$.

In Figure~\ref{fig:minigrid}, we present the experimental results of the on-policy and off-policy algorithms in this task. We compare across different values of $N$, and we consider three rooms with different sizes ($n=3,5,9$). Recall that $N$ is the number of most recent policies the algorithm considers when updating rewards. The experimental results show that the off-policy AIL algorithms ($N>1$) rapidly converge to policies that align closely with the expert's performance, and demonstrate improved sample efficiency compared to the on-policy algorithm ($N=1$). To match the expert's performance, the off-policy AIL algorithm with $N=32$ only requires approximately $1{,}000$ interactions between the agent and the environment in each task. The on-policy AIL algorithm, on the other hand, requires the entire training horizon of $10{,}000$ interactions to converge in the $3\times 3$ room, and requires additional training to converge to the expert's performance in the $5\times 5$ and $9\times 9$ rooms. 

Among the off-policy AIL algorithms ($N>1$), we notice that the speed of convergence suffers as the size of the room increases for small values of $N$ (i.e., $N=4$). On the other hand, we see little benefit from increasing the value of $N$ beyond $N=32$, which achieves strong performance in all room sizes. This observation is consistent with our theory, which tells us that there is an optimal value of $N$ that grows as the size of the state space becomes larger.

\begin{figure*}[t]
\vskip 0.2in
  \begin{center}
  \subfigure[HalfCheetah-v2]{\includegraphics[width=0.32\textwidth]{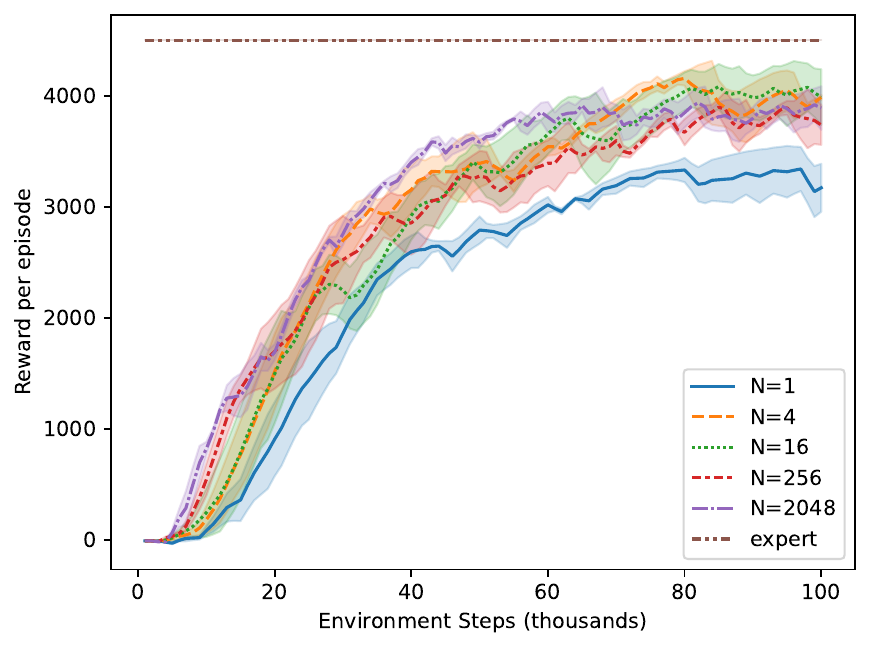}\label{fig:halfcheetah_n}}
  \subfigure[Hopper-v2]{\includegraphics[width=0.32\textwidth]{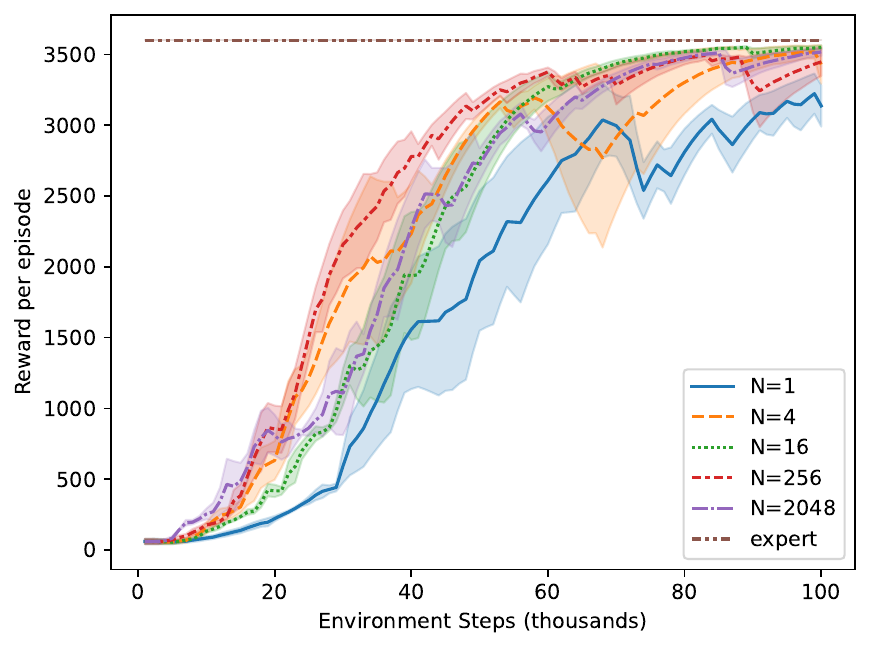}\label{fig:hopper_n}}
  \subfigure[Walker2d-v2]{\includegraphics[width=0.32\textwidth]{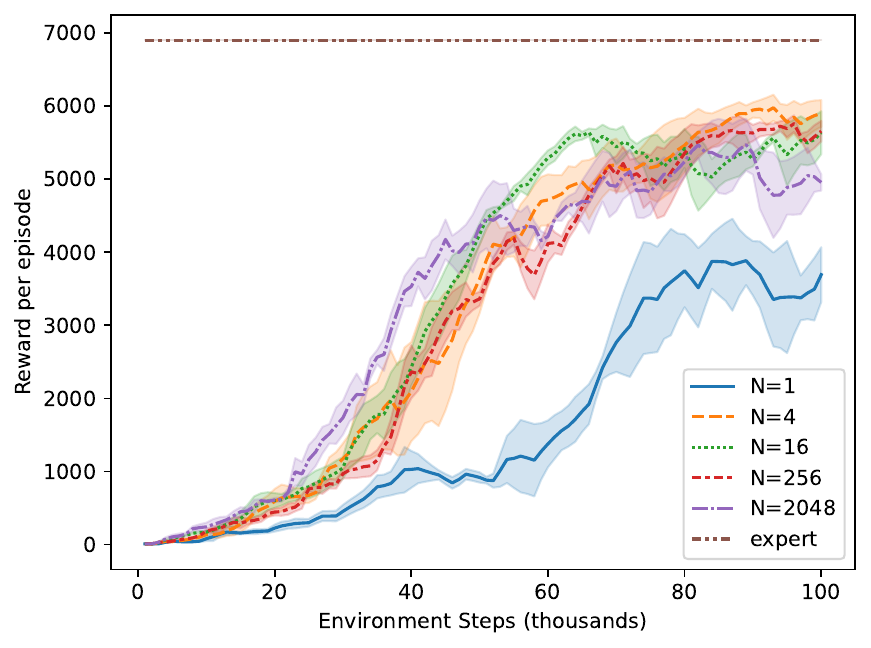}\label{fig:walker_n}}
  \caption{Experimental results for our off-policy AIL algorithm with different $N$ in three continuous space MuJoCo locomotion environments: HalfCheetah-v2, Hopper-v2, and Walker2d-v2. Training curves represent total reward per episode as a function of environment interactions. We evaluate the learned policy using average performance over $10$ episodes. $N$ denotes the number of most recent policies we consider during reward updates, where $N = 1$ represents the on-policy algorithm. The expert's demonstration consists of $10$ trajectories which are trained by Soft Actor-Critic \citep{haarnoja2018soft}. We run each experiment for $10$ different seeds and the shading represents the standard error. For more implementation details, please refer to Section~\ref{sec:mujoco_details} in the Appendix.}
  \label{fig:mujoco}
  \end{center}
  \vskip -0.2in
\end{figure*}

\subsection{MuJoCo Benchmarks}
Additionally, we tested our algorithms in three continuous space locomotion tasks simulated in MuJoCo: HalfCheetah-v2, Hopper-v2, and Walker2d-v2. To deploy our algorithm in tasks with continuous spaces, some modifications are necessary. During policy updates, we cannot update the policy by the closed-form update rule in \eqref{eq:closed_form} which is implemented on each state. Instead, we take $5$ gradient ascent steps to approximate the policy update in~\eqref{eq:policy_update}. Note that we consider a deep RL implementation using neural network parameterizations, so our theoretical results do not apply to this setting. Nevertheless, we are interested in analyzing whether our results provide support for the performance trends observed in a deep RL setting.

In Figure~\ref{fig:mujoco}, we present the experimental results of the on-policy and off-policy algorithms in each continuous control task. First, comparing the on-policy algorithm ($N=1$) and the off-policy algorithm ($N>1$), we find that the off-policy algorithm always performs better than the on-policy algorithm for all values of $N>1$. In every environment, the best performance is achieved by one of the off-policy AIL algorithms. Second, comparing the off-policy algorithms with different $N$, we can achieve benefits even for small values of $N$, and there is no significant benefit from increasing $N$ to very large values. Finally, the experimental results show that the optimal value of $N$ varies across environments, and there is not a very clear relationship between the environments and the optimal $N$. This may be due to the use of neural networks applied during practical training, which is not captured by our theory.


\section{Conclusion}
We studied the convergence and sample complexity of off-policy adversarial imitation learning algorithms. First, we established convergence guarantees for off-policy AIL algorithms. Furthermore, based on the regret bound we derived, we provided theoretical evidence for the sample efficiency of our off-policy algorithm. Specifically, we showed that in scenarios where the size of the state space considerably outweighs other specifications, the distribution shift error induced by off-policy updates is dominated by the benefits of having more data available. This result provides theoretical support for the benefits of off-policy AIL algorithms observed in practice.
We further demonstrated the practical performance of our off-policy algorithm in discrete space MiniGrid environments and continuous space MuJoCo benchmarks. Our experimental results indicate that the off-policy algorithm often outperforms its on-policy counterpart, while requiring substantially fewer samples. 

There are several avenues for future work.  First, based on our current theoretical framework, we can only prove convergence guarantees when $N= o(\sqrt{K})$. However, practically, when $N=K$, the algorithm remains effective. We believe that a sharper analysis is needed in these cases. Further, when reusing off-policy data, we simply assume a uniform distribution to construct the mixture distribution from the past policies. We conjecture that a carefully designed sampling distribution may improve the results.




\bibliography{main}
\bibliographystyle{tmlr}

\newpage
\appendix
\section{Proofs of Theorems and Lemmas in Section~\ref{sec:results}}
\subsection{Proof of Lemma~\ref{lemma:fhu-MDP}}

\begin{proof}
Different from $d(s,a)$, we use $\nu(s)$ to denote the state occupancy distribution. By the definition of the state occupancy distribution, we have
\begin{align}
    ||\nu^{\pi}_h-\nu^{\pi'}_h||_1&=||P_{\pi_{h-1}}\nu^{\pi}_{h-1}-P_{\pi'_{h-1}}\nu^{\pi'}_{h-1}||_1 \nonumber \\
    &=||P_{\pi_{h-1}}\nu^{\pi}_{h-1}-P_{\pi'_{h-1}}\nu^{\pi}_{h-1}+P_{\pi'_{h-1}}\nu^{\pi}_{h-1}-P_{\pi'_{h-1}}\nu^{\pi'}_{h-1}||_1 \nonumber \\
    &\le||(P_{\pi_{h-1}}-P_{\pi'_{h-1}})\nu^\pi_{h-1}||_1+||P_{\pi'_{h-1}}(\nu^\pi_{h-1}-\nu^{\pi'}_{h-1})||_1,
    \label{eq:A.1.1}
\end{align}
where $P_\pi$ is the transpose of the state transition matrix induced by policy $\pi$ (i.e., $P_\pi(i,j)$ is the transition probability from state $j$ to state $i$).
Following the proof of Lemma 3 in \citet{achiam2017constrained} for infinite discounted MDPs, we have
\begin{align}
    ||(P_{\pi_{h-1}}-P_{\pi'_{h-1}})\nu^\pi_{h-1}||_1&=\sum_{s'} \left| \sum_s (P_{\pi_{h-1}}-P_{\pi'_{h-1}})(s'|s)\nu^\pi_{h-1}(s) \right| \nonumber\\
    &\le \sum_{s',s} \left| (P_{\pi_{h-1}}-P_{\pi'_{h-1}})(s'|s) \right| \nu^\pi_{h-1}(s) \nonumber\\
    &=\sum_{s',s} \left| \sum_a P_{h-1}(s'|s,a)(\pi_{h-1}(a|s)-\pi'_{h-1}(a|s)) \right|\nu^\pi_{h-1}(s) \nonumber \\
    &\le\sum_{s,a,s'}P_{h-1}(s'|s,a) \left|\pi_{h-1}(a|s)-\pi'_{h-1}(a|s) \right|\nu^\pi_{h-1}(s) \nonumber \\
    &=\sum_{s,a}\left| \pi_{h-1}(a|s)-\pi'_{h-1}(a|s) \right|\nu^\pi_{h-1}(s) \nonumber \\
    &=2 \mathbb{E}_{s\sim \nu^\pi_{h-1}}[\mathbb{D}_{\textnormal{TV}}(\pi_{h-1}||\pi'_{h-1})[s]].
    \label{eq:A.1.2}
\end{align}
Next, notice that
\begin{align}
    ||P_{\pi'_{h-1}}(\nu^\pi_{h-1}-\nu^{\pi'}_{h-1})||_1 \le ||\nu^\pi_{h-1}-\nu^{\pi'}_{h-1}||_1. 
    \label{eq:A.1.3}
\end{align}
By applying \eqref{eq:A.1.2} and \eqref{eq:A.1.3} to \eqref{eq:A.1.1}, we have
\begin{align}
    ||\nu^{\pi}_h-\nu^{\pi'}_h||_1&\le 2 \mathbb{E}_{s\sim \nu^\pi_{h-1}}[\mathbb{D}_{\textnormal{TV}}(\pi_{h-1}||\pi'_{h-1})[s]] + ||\nu^\pi_{h-1}-\nu^{\pi'}_{h-1}||_1\nonumber\\
    &\le 2 \mathbb{E}_{s\sim \nu^\pi_{h-1}}[\mathbb{D}_{\textnormal{TV}}(\pi_{h-1}||\pi'_{h-1})[s]] \nonumber \\
    & \qquad + 2 \mathbb{E}_{s\sim \nu^\pi_{h-2}}[\mathbb{D}_{\textnormal{TV}}(\pi_{h-2}||\pi'_{h-2})[s]] + ||\nu^\pi_{h-2}-\nu^{\pi'}_{h-2}||_1 \nonumber\\
    &\ \ \vdots \nonumber\\
    &\le 2\sum_{i=1}^{h-1}\mathbb{E}_{s\sim \nu^\pi_i}[\mathbb{D}_{\textnormal{TV}}(\pi_i||\pi'_i)[s]].
    \label{eq:A.1.4}
\end{align}
Next, we can bound the discrepancy of state-action occupancy distributions by the discrepancy of state occupancy distributions as follows: 
\begin{align}
    ||d^{\pi}_h-d^{\pi'}_h||_1&=\sum_{s,a} \left| d^\pi_h(s,a)-d^{\pi'}_h(s,a) \right| \nonumber\\ 
    &= \sum_{s,a} \left| \nu^\pi_h(s)\pi_h(a|s)-\nu^{\pi}_h(s)\pi'_h(a|s)+\nu^{\pi}_h(s)\pi'_h(a|s)-\nu^{\pi'}_h(s)\pi'_h(a|s) \right| \nonumber\\
    &\le \sum_{s,a}\nu^{\pi}_h(s) \left| \pi_h(a|s)-\pi'_h(a|s) \right| + \sum_s \left| \nu^\pi_h(s)-\nu^{\pi'}_h(s) \right| \nonumber\\
    &= 2\mathbb{E}_{s\sim \nu^{\pi}_h}[\mathbb{D}_{\textnormal{TV}}(\pi_h||\pi'_h)[s]] + ||\nu^{\pi}_h-\nu^{\pi'}_h||_1.
    \label{eq:A.1.5}
\end{align}
By applying \eqref{eq:A.1.4} to \eqref{eq:A.1.5}, we have
\begin{align*}
    ||d^{\pi}_h-d^{\pi'}_h||_1\le2\sum_{i=1}^h \mathbb{E}_{s\sim d^\pi_i}[\mathbb{D}_{\textnormal{TV}}(\pi_i||\pi'_i)[s]].
\end{align*}
\end{proof}

\subsection{Proof of Theorem~\ref{theo:TV-ub}}
\begin{proof}
Let $T_h(s) = \sum_a\pi_h^{k-1}(a|s)\cdot\exp\{\sigma\cdot\hat{Q}_h^{\pi^{k-1}}(s,a)\}$. Based on the policy update in \eqref{eq:closed_form}, we have
\begin{equation*}
    \frac{\pi_h^k(a|s)}{\pi_h^{k-1}(a|s)}=\frac{e^{\sigma\cdot\hat{Q}_h^{\pi^{k-1}}(s,a)}}{T_h(s)},\ \forall s\in \mathcal{S}.
\end{equation*}
By taking the logarithm on both sides, we see that
\begin{equation*}
    \log(\pi_h^k(a|s))-\log(\pi_h^{k-1}(a|s))=\sigma\cdot\hat{Q}_h^{\pi^{k-1}}(s,a)-\log(T_h(s)),\ \forall s\in \mathcal{S}. 
\end{equation*}
The mean value theorem tells us that for $0 < p < q < 1$, there exists $y\in[p, q]$ such that 
\begin{align*}
    \frac{\log p - \log q}{p-q}=\frac{1}{y}.
\end{align*}
By applying the above result, we have
\begin{equation*}
    \frac{\pi_h^k(a|s)-\pi_h^{k-1}(a|s)}{y}=\sigma\cdot\hat{Q}_h^{\pi^{k-1}}(s,a)-\log(T_h(s)),\ \forall s\in \mathcal{S}.
\end{equation*}
Note that $y\in(0, 1)$, which implies
\begin{equation*}
    \left| \pi_h^k(a|s)-\pi_h^{k-1}(a|s) \right| \le \left| \sigma\cdot\hat{Q}_h^{\pi^{k-1}}(s,a)-\log(T_h(s)) \right|. 
\end{equation*}
Due to the boundedness of rewards, we have $0 \le Q_h^{\pi^{k-1}}(s,a) \le H$ and $1 \leq T_h(s) \le\exp\{\sigma H\}$. Together with the above inequality, we have
\begin{equation*}
    \sum_a|\pi_h^k(a|s)-\pi_h^{k-1}(a|s)|\le \sigma\sum_a |Q_h^{\pi^{k-1}}(s,a)| + \sum_a \left| \log T_h(s) \right| \le 2A\sigma H. 
\end{equation*}
Hence, we have $\mathbb{D}_{\textnormal{TV}}(\pi_h^{k}||\pi_h^{k-1})[s]=\frac{1}{2}\sum_a|\pi_h^k(a|s)-\pi_h^{k-1}(a|s)|\le O(AH\sigma)$. When $\sigma=\sqrt{2\log A/(H^2K)}$, $\mathbb{D}_{\textnormal{TV}}(\pi_h^{k}||\pi_h^{k-1})[s]\le O(\sqrt{2A^2\log A / K})$.
\end{proof}

\subsection{Proof of Theorem~\ref{theo:reward_regret}}
\begin{proof}
This proof builds upon the proof from \citet{liu2022learning}, which considered on-policy reward updates. We start from an empirical version of the reward update in \eqref{eq:reward_update} given by
\begin{align*}
    \mu^{k+1}_h = \mathrm{Proj}_{\mu\in \mathcal{R}}\left\{\mu^{k}_h + \eta \hat{\nabla}_\mu L(\pi^{k,mix}, \mu^{k})\right\},
\end{align*}
where we use $\hat\nabla$ to indicate the empirical gradient. Based on the property of projection, we have
\begin{align}
    (\mu_h-\mu_h^{k+1})^T(\mu_h^{k+1}-\mu_h^{k}-\eta \hat{\nabla}_\mu L(\pi^{k,mix}, \mu^{k}))\ge 0.
    \label{eq:A.3.1}
\end{align}
By rearranging \eqref{eq:A.3.1}, we have
\begin{align}
    \eta(\mu_h-\mu_h^{k+1})^T\hat{\nabla}_{\mu_h}L(\pi^{k,mix},\mu^{k})&\le(\mu_h^{k+1}-\mu_h^{k})^T(\mu_h-\mu_h^{k+1})  \nonumber \\
    &=\frac{1}{2}(||\mu_h^{k}-\mu_h||_2^2-||\mu_h^{k+1}-\mu_h||_2^2-||\mu_h^{k+1}-\mu_h^{k}||_2^2).
    \label{eq:A.3.2}
\end{align}
Given a policy $\pi$, note that $L(\pi, \mu)$ is linear with respect to $\mu$ where $\mu=\{\mu_h\}_{h=1}^H$. Therefore, we have
\begin{align}
    L(\pi^{k}, \mu) - L(\pi^{k}, \mu^{k}) = \sum_{h=1}^H(\mu_h - \mu^{k}_h)^T\nabla_{\mu_h} L(\pi^{k},\mu^{k}).
    \label{eq:A.3.3}
\end{align}
Next, by adding \eqref{eq:A.3.3} to both sides of \eqref{eq:A.3.2} and rearranging, we have
\begin{align*}
    L(\pi^{k}, \mu) - L(\pi^{k}, \mu^{k}) \le & \sum_{h=1}^H\frac{1}{2\eta}(||\mu_h^{k}-\mu_h||_2^2-||\mu_h^{k+1}-\mu_h||_2^2-||\mu_h^{k+1}-\mu_h^{k}||_2^2) \\
    &+\sum_{h=1}^H(\mu_h^{k+1}-\mu_h^{k})^T\hat{\nabla}_{\mu_h}L(\pi^{k,mix},\mu^{k})\\
    &+\sum^{H}_{h=1}(\mu^{k}_h-\mu_h)^T(\nabla_{\mu_h}L(\pi^{k,mix},\mu^{k})-\nabla_{\mu_h}L(\pi^{k},\mu^{k})) \\
    &+\sum_{h=1}^H(\mu^{k}_h-\mu_h)^T(\hat{\nabla}_{\mu_h}L(\pi^{k,mix},\mu^{k})-\nabla_{\mu_h}L(\pi^{k,mix},\mu^{k})).
\end{align*}
Using this inequality, we can bound the regret of reward updates by
\begin{align}
    \mathrm{Regret}_\mu=&\sup_{\mu\in\mathcal{R}} \sum_{k=1}^K L(\pi^k,\mu)-L(\pi^k,\mu^k) \nonumber \\
    \le &\sup_{\mu\in\mathcal{R}}\sum_{k=1}^K\sum_{h=1}^H\frac{1}{2\eta}(||\mu_h^{k}-\mu_h||_2^2-||\mu_h^{k+1}-\mu_h||_2^2-||\mu_h^{k+1}-\mu_h^{k}||_2^2) \label{eq:A.3.4} \\
    &+\sum_{k=1}^K\sum_{h=1}^H(\mu_h^{k+1}-\mu_h^{k})^T\hat{\nabla}_{\mu_h}L(\pi^{k,mix},\mu^{k}) \label{eq:A.3.5}\\
    &+\sup_{\mu\in\mathcal{R}} \sum_{k=1}^K\sum^{H}_{h=1}(\mu^{k}_h-\mu_h)^T(\nabla_{\mu_h}L(\pi^{k,mix},\mu^{k})-\nabla_{\mu_h}L(\pi^{k},\mu^{k})) \label{eq:A.3.6}\\
    &+\sup_{\mu\in\mathcal{R}}\sum_{k=1}^K\sum_{h=1}^H(\mu^{k}_h-\mu_h)^T(\hat{\nabla}_{\mu_h}L(\pi^{k,mix},\mu^{k})-\nabla_{\mu_h}L(\pi^{k,mix},\mu^{k})). \label{eq:A.3.7} 
\end{align}
Next, we bound each of these terms individually. We can bound \eqref{eq:A.3.4} by
\begin{multline}
    \sup_{\mu\in \mathcal{R}}\sum_{k=1}^K\sum_{h=1}^H\frac{1}{2\eta}(||\mu_h^k-\mu_h||_2^2-||\mu_h^{k+1}-\mu_h||_2^2-||\mu_h^{k+1}-\mu_h^k||_2^2) \\ \le \frac{1}{2\eta}\sup_{\mu\in \mathcal{R}}\sum_{h=1}^H(||\mu_h^1-\mu_h||^2_2-||\mu_h^{K+1}-\mu_h||^2_2) 
    \le \frac{HSA}{2\eta}. \label{eq:A.3.13}
\end{multline}
The last inequality holds because $\mu_h(s,a)\in [0, 1], \forall s\in \mathcal{S}, a\in \mathcal{A}, h\in[H]$.

We can bound \eqref{eq:A.3.5} by
\begin{align}
    \sum_{k=1}^K\sum_{h=1}^H(\mu_h^{k+1}-\mu_h^k)^T\hat{\nabla}_{\mu_h}L(\pi^{k, mix},\mu^k) &\le \sum_{k=1}^K\sum_{h=1}^H||\mu_h^{k+1}-\mu_h^k||_2\cdot||\hat\nabla_{\mu_h}L(\pi^{k,mix},\mu^k)||_2 \nonumber \\
    &\le \eta\sum_{k=1}^K\sum_{h=1}^H||\hat{\nabla}_{\mu_h} L(\pi^{k,mix},\mu^k)||^2_2 \nonumber \\
    &\le 4\eta HK. \label{eq:A.3.14}
\end{align}
The first inequality holds by Cauchy-Schwarz inequality. The second inequality holds due to $||\mu_h^{k+1}-\mu_h^k||_2\le||\eta\hat{\nabla}_{\mu_h}L(\pi^{mix},\mu^k)||_2$ based on \eqref{eq:reward_update} and the property of projection. The last inequality holds because $||\hat{\nabla}_{\mu_h} L(\pi^{k,mix},\mu^k)||_2 = ||\hat{d}_h^{\pi^E}-\hat{d}_h^{\pi^{k,mix}}||_2\le 2$.

In order to bound \eqref{eq:A.3.6}, we first apply H{\"o}lder's inequality to see that
\begin{multline}
    \sup_{\mu\in \mathcal{R}}\sum_{k=1}^K\sum_{h=1}^H (\mu^k_h-\mu_h)^T(\nabla_{\mu_h}L(\pi^{k, mix},\mu^k)-\nabla_{\mu_h}L(\pi^k,\mu^k)) \\
    \le \sup_{\mu\in \mathcal{R}}\sum_{k=1}^K\sum_{h=1}^H ||\mu_h^k-\mu_h||_\infty\cdot||\nabla_{\mu_h}L(\pi^{k, mix},\mu^k)-\nabla_{\mu_h}L(\pi^k,\mu^k))||_1. \label{eq:A.3.8}
\end{multline}
By definition, we have
\begin{align}
    \nabla_{\mu_h}L(\pi^{k,mix},\mu^k)-\nabla_{\mu_h}L(\pi^k,\mu^k)&=\nabla_{\mu_h}J(\pi^k,\mu^k)-\nabla_{\mu_h}J(\pi^{k,mix},\mu^k) \nonumber \\
    &=d_h^{\pi^k}-\sum_{n=1}^{N(k)}\beta_nd_h^{\pi^{k+1-n}}. \label{eq:A.3.9}
\end{align}
We assume the sampling weights of each policy are the same (i.e., $\beta_1=\dots=\beta_{N(k)}=\frac{1}{N(k)}$). Then,
\begin{align}
    d_h^{\pi^k}-\sum_{n=1}^{N(k)}\beta_nd_h^{\pi^{k+1-n}}&=
    \sum_{n=1}^{N(k)}\beta_n(d_h^{\pi^k}-d_h^{\pi^{k+1-n}}) \nonumber \\
    &=\frac{1}{N(k)}\sum_{n=1}^{N(k)}(d_h^{\pi^k}-d_h^{\pi^{k+1-n}}) \nonumber \\
    &=\frac{1}{N(k)}\sum_{n=1}^{N(k)-1} (N(k)-n)\cdot(d_h^{\pi^{k+1-n}}-d_h^{\pi^{k-n}}). \label{eq:A.3.10}
\end{align}
Based on Lemma~\ref{lemma:fhu-MDP} and Theorem~\ref{theo:TV-ub}, we have that
\begin{equation}
    ||d^{\pi^{k+1-n}}_h-d^{\pi^{k-n}}_h||_1\le 2\sum_{i=1}^h \mathbb{E}_{s\sim d^{\pi^{k-n}}_i}[\mathbb{D}_{\textnormal{TV}}(\pi^{k+1-n}_i||\pi^{k-n}_i)[s]] \le \tilde{O}\left(\frac{Ah}{\sqrt{K}}\right). \label{eq:A.3.11}
\end{equation}
By applying \eqref{eq:A.3.10} and \eqref{eq:A.3.11} to \eqref{eq:A.3.9}, we have
\begin{align}
    ||\nabla_{\mu_h}L(\pi^{k, mix},\mu^k)-\nabla_{\mu_h}L(\pi^k,\mu^k))||_1 &\le \frac{1}{N(k)}\sum_{n=1}^{N(k)-1}(N(k)-n)||d_h^{\pi^{k+1-n}}-d_h^{\pi^{k-n}}||_1  \nonumber \\
    &\le \tilde{O}\left(\frac{Ah(N(k)-1)}{\sqrt{K}}\right). \label{eq:A.3.12}
\end{align}
Then, we apply \eqref{eq:A.3.12} to \eqref{eq:A.3.8} and use the fact that $\mu_h(s,a)\in [0, 1]$ to show
\begin{align}
    &\sup_{\mu\in \mathcal{R}}\sum_{k=1}^K\sum_{h=1}^H (\mu^k_h-\mu_h)^T(\nabla_{\mu_h}L(\pi^{k, mix},\mu^k)-\nabla_{\mu_h}L(\pi^k,\mu^k)) \nonumber \\
    &\qquad \qquad \qquad \le \sup_{\mu\in \mathcal{R}}\sum_{k=1}^K\sum_{h=1}^H ||\mu_h^k-\mu_h||_\infty\cdot||\nabla_{\mu_h}L(\pi^{k, mix},\mu^k)-\nabla_{\mu_h}L(\pi^k,\mu^k))||_1 \nonumber \\
    &\qquad \qquad \qquad \le \tilde{O}\left(\sqrt{\frac{H^4A^2(\sum_k(N(k)-1))^2}{K}}\right). \label{eq:A.3.15}
\end{align}

In order to bound \eqref{eq:A.3.7}, note that $\pi^{k,mix}$ can be regarded as a single policy based on Corollary~\ref{cor:pimix}. Therefore, we can bound \eqref{eq:A.3.7} in the same way as in \citet{liu2022learning} (Lemma H.8). 
\begin{lemma}[Lemma H.8 in \citealt{liu2022learning} for tabular MDPs]
    With probability at least $1-\delta$, it holds that
    \begin{align*}
        \sup_{\mu\in \mathcal{R}}\sum_{k=1}^K\sum_{h=1}^H(\mu_h^k-\mu_h)^T(\hat \nabla_{\mu_h}L(\pi^{k,mix}, \mu^k)-\nabla_{\mu_h}L(\pi^{k,mix},\mu^k))\le\tilde{O}(\sqrt{H^3SAK}),
    \end{align*}
    where $\hat \nabla_{\mu_h}L(\pi^{k,mix},\mu^k)$ is estimated with one sample.
\end{lemma}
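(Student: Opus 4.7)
The plan is to split the target sum into a $\mu$-independent martingale piece plus a supremum piece whose $\mu$-dependence decouples coordinate-wise, then bound each via a variance-aware (Freedman-type) martingale concentration argument. Write
\begin{equation*}
\sum_{k,h}(\mu_h^k-\mu_h)^\top(\hat\nabla_{\mu_h}L-\nabla_{\mu_h}L)
= \sum_{k,h}(\mu_h^k)^\top(\hat\nabla_{\mu_h}L-\nabla_{\mu_h}L)
+ \sum_{k,h}\mu_h^\top(\nabla_{\mu_h}L-\hat\nabla_{\mu_h}L),
\end{equation*}
and let $\mathcal{F}_{k-1}$ denote the $\sigma$-algebra generated by all samples through iteration $k-1$. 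Since $\mu^k$ and the mixture policy $\pi^{k,mix}$ are $\mathcal{F}_{k-1}$-measurable---and by Corollary~\ref{cor:pimix} the mixture may be viewed as a single policy---the fresh trajectory drawn at iteration $k$ satisfies $\mathbb{E}[\hat\nabla_{\mu_h}L\mid\mathcal{F}_{k-1}]=\nabla_{\mu_h}L$, so both sums are sums of martingale differences.

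For the first sum, each increment is bounded by $\|\mu_h^k\|_\infty\,\|\hat\nabla_{\mu_h}L-\nabla_{\mu_h}L\|_1 \le 2$, so Azuma-Hoeffding yields $\tilde O(\sqrt{HK\log(1/\delta)})$ with probability at least $1-\delta/2$. For the second sum, exploit the box structure of $\mathcal{R}$: since $\mu_h(s,a)\in[0,1]$, the supremum is attained by choosing each coordinate in $\{0,1\}$, hence
\begin{equation*}
\sup_{\mu\in\mathcal{R}}\sum_{k,h}\mu_h^\top(\nabla_{\mu_h}L-\hat\nabla_{\mu_h}L)
= \sum_{h,s,a}\Bigl[\sum_{k=1}^K\bigl(\nabla_{\mu_h(s,a)}L-\hat\nabla_{\mu_h(s,a)}L\bigr)\Bigr]_+.
\end{equation*}
For each fixed $(h,s,a)$, the inner scalar martingale has $[-1,1]$-bounded increments and conditional variance at step $k$ at most $d_h^{\pi^E}(s,a)+d_h^{\pi^{k,mix}}(s,a)$. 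Freedman's inequality with a union bound over the $SAH$ coordinates, followed by Cauchy-Schwarz over $(h,s,a)$, gives
\begin{equation*}
\sum_{h,s,a}\sqrt{V_{h,s,a}\log(SAH/\delta)}
\le \sqrt{SAH\log(SAH/\delta)\sum_{h,s,a}V_{h,s,a}},
\end{equation*}
where $V_{h,s,a}$ is the total conditional variance at coordinate $(h,s,a)$. Since $\sum_{s,a}d_h^{\pi}(s,a)=1$ for every $h$ and every $\pi$, we have $\sum_{h,s,a}V_{h,s,a}=O(HK)$, which controls the aggregate concentration error.

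The main obstacle is cleanly validating the martingale structure when $\pi^{k,mix}$ is a stochastic mixture of the $N$ most recent learned policies; Corollary~\ref{cor:pimix} is precisely what lets us treat $\pi^{k,mix}$ as a single (data-dependent) policy, after which standard adaptive-sampling concentration arguments apply. A secondary subtlety is that a naive Azuma bound applied coordinatewise would yield $\tilde O(SAH\sqrt{K})$---the variance-aware Freedman bound is essential to recover the correct $\sqrt{SAH\cdot(\text{aggregate variance})}$ scaling. Combining the two pieces and absorbing logarithmic factors into $\tilde O(\cdot)$ delivers the stated $\tilde O(\sqrt{H^3SAK})$ bound, where the extra factor of $H$ over the $\sqrt{H^2SAK}$ one could extract from this sketch reflects the looser $\ell_\infty$-to-$\ell_1$ bookkeeping used in the reference's Lemma~H.8.
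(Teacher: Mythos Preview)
The paper does not actually prove this lemma: it is quoted verbatim from \citet{liu2022learning} (their Lemma~H.8), and the paper's only contribution is the one-line observation that Corollary~\ref{cor:pimix} lets $\pi^{k,mix}$ be treated as a single policy so that the cited lemma applies. Your proposal, by contrast, supplies a self-contained argument. The decomposition into an adapted martingale piece (handled by Azuma) and a supremum piece that decouples coordinatewise over the box $\mathcal{R}=[0,1]^{SAH}$, then controlled by Freedman plus Cauchy--Schwarz over coordinates, is correct and is in fact sharper than the cited bound: your variance accounting $\sum_{h,s,a}V_{h,s,a}=O(HK)$ genuinely delivers $\tilde O(\sqrt{H^2SAK})$, saving a factor of $\sqrt{H}$ over the stated $\tilde O(\sqrt{H^3SAK})$, as you yourself note. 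Two minor remarks: (i) your Azuma bound on the first piece should be $\tilde O(H\sqrt{K})$ rather than $\tilde O(\sqrt{HK})$, since the natural martingale is indexed by $k$ with per-step increments bounded by $O(H)$ (the $h$-coordinates within a single trajectory are correlated), but this is still dominated; (ii) the phrase ``fresh trajectory drawn at iteration $k$'' presumes the one sample is newly drawn from $d^{\pi^{k,mix}}$ rather than reused from the buffer---this is exactly the idealization under which both the paper and \citet{liu2022learning} state the lemma, so it is fine here, but it is worth flagging since the actual algorithm reuses data across iterations and the paper handles the $BN$-sample extension separately.
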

By applying this result and using high-probability Azuma-Hoeffding inequality, with probability at least $1-\delta$ we have that
\begin{align}
    \sup_{\mu\in \mathcal{R}}\sum_{k=1}^K\sum_{h=1}^H&(\mu^k_h-\mu_h)^T(\hat{\nabla}_{\mu_h}L(\pi^{k, mix},\mu^k)-\nabla_{\mu_h}L(\pi^{k, mix},\mu^k)) \nonumber \\
    &=\sup_{\mu\in \mathcal{R}}\sum_{k=1}^K\sum_{h=1}^H(\mu^k_h-\mu_h)^T\left(d_h^{\pi^{k, mix}}-\hat d_h^{\pi^{k,mix}}\right) \nonumber \\
    &\le \tilde{O}\left(\sqrt{\frac{H^3SAK^2}{B\sum_k N(k)}}\right), \label{eq:A.3.16}
\end{align}
where $BN(k)$ is number of samples used to estimate $\hat \nabla_{\mu_h}L(\pi^{k,mix},\mu^k)$.

Recall that we assume the number of expert trajectories is infinite, so we do not need to consider the estimation error related to the expert's policy. This assumption does not affect our theoretical results as it is only related to the number of available trajectories of the expert policy, instead of on-policy or off-policy data from behavior policies.

By combining the results from \eqref{eq:A.3.13}, \eqref{eq:A.3.14}, \eqref{eq:A.3.15}, and \eqref{eq:A.3.16} and setting $\eta=\sqrt{\frac{SA}{K}}$, with probability at least $1-\delta$ we have
\begin{align*}
    \sup_{\mu\in \mathcal{R}}&\sum_{k=1}^K L(\pi^k,\mu)-L(\pi^k,\mu^k) \\
    &\le \tilde{O}\left(\frac{HSA}{\eta} + \eta HK + \sqrt{\frac{H^4A^2(\sum_k(N(k)-1))^2}{K}}+\sqrt{\frac{H^3SAK^2}{B\sum_k N(k)}}\right) \\
    &=\tilde{O}\left(\sqrt{H^2SAK} +\sqrt{\frac{H^4A^2(\sum_k(N(k)-1))^2}{K}}+\sqrt{\frac{H^3SAK^2}{B\sum_{k}N(k)}}\right).
\end{align*}
Specifically, if $N$ is fixed, then with probability at least $1-\delta$ we have
\begin{align*}
    \sup_{\mu\in \mathcal{R}}\sum_{k=1}^K L(\pi^k,\mu)-L(\pi^k,\mu^k) &\le \tilde{O}\left(\frac{HSA}{\eta} + \eta HK + \sqrt{H^4(N-1)^2A^2K}+\sqrt{\frac{H^3SAK}{BN}}\right) \\
    &=\tilde{O}\left(\sqrt{H^2SAK} + \sqrt{H^4(N-1)^2A^2K}+\sqrt{\frac{H^3SAK}{BN}}\right).
\end{align*}
\end{proof}

\subsection{Proof of Theorem~\ref{thm:1to1}}
\begin{proof}
First, the convexity of $\mathcal{D}$ can be easily verified because all constraints are linear. Second, without loss of generality, we can assume $d(s)>0,\ \forall s\in \mathcal{S}$. Otherwise, if there exists $s\in \mathcal{S}$ such that $d(s)=0$, we can take arbitrary actions at this state and it is enough to consider the state space $\mathcal{S}\setminus\{s\}$. Based on this assumption, we define
\begin{align*}
\pi_h(a|s)\triangleq d_h(s,a)/\sum_{a'}d_h(s,a'),\ \forall s\in \mathcal{S}.
\end{align*}
It can be verified that policy $\pi=\{\pi_h\}_{h=1}^H$ and state-action visitation distribution $d=\{d_h\}_{h=1}^H$ has a one-to-one correspondence. The proof is extended from \citet{syed2008apprenticeship}, where the setting was infinite discounted MDPs. First, given a policy $\pi$, it is trivial to show that its occupancy distribution $d^{\pi}=[d_1,\dots,d_H]$ is in the set $\mathcal{D}$. Next we will show, given a feasible distribution $d\in \mathcal{D}$, it can be verified that $\pi_h(a|s)$ defined above is the unique policy that has the same occupancy distribution (we have assumed $d(s)>0, \forall s\in \mathcal{S},\ h\in[H]$). First, by the definition $d^\pi_1(s,a)=\left(\sum_{a'}d_1(s,a')\right)\pi_1(a|s)$, $\pi_1(a|s)$ is uniquely defined by
\begin{align*}
    \pi_1(a|s)=\frac{d^\pi_1(s,a)}{\sum_{a'}d_1^\pi(s,a')}.
\end{align*}
For $h=2,\dots,H$, we have
\begin{equation*}
    d_h^\pi(s,a)=\pi_h(a|s)\sum_{s',a'}d^\pi_{h-1}(s',a')P(s|s',a'),
\end{equation*}
so it follows that
\begin{equation*}
    \pi_h(a|s) = \frac{d_h^\pi(s,a)}{\sum_{s',a'}d^\pi_{h-1}(s',a')P(s|s',a')}.
\end{equation*}
Finally, note that $\sum_{s',a'}d^\pi_{h-1}(s',a')P(s|s',a')=\sum_a d_h^\pi(s,a)$. Therefore, we have
\begin{align*}
    \pi_h(a|s) = \frac{d_h^\pi(s,a)}{\sum_{a'} d^\pi_h(s,a')}.
\end{align*}
\end{proof}

\section{Extension to linear MDPs}
In this section, we extend our results to linear MDPs. Our analysis builds upon the framework in \citet{liu2022learning}, which considered on-policy reward updates in linear MDPs.

\begin{definition}[Linear MDPs]
We define linear MDPs as in \cite{liu2022learning, jin2020provably, agarwal2020flambe}. A MDP is linear if its transition function and reward function are linear based on known feature spaces, i.e., there exists a feature map $\psi: \mathcal{S} \times \mathcal{A} \times \mathcal{S} \rightarrow \mathbb{R}^d$ and $\theta_h\in \mathbb{R}^d$ such that
\begin{align*}
    P_h(s'|s,a) = \psi(s,a,s')^T\theta_h,\; \forall (s,a,s')\in \mathcal{S} \times \mathcal{A} \times \mathcal{S},
\end{align*}
and there exists a feature map $\phi: \mathcal{S} \times \mathcal{A} \rightarrow \mathbb{R}^d$ and $\mu_h\in \mathbb{R}^d$ such that
\begin{align*}
    r_h(s,a) = \phi(s,a)^T\mu_h, \forall (s,a)\in \mathcal{S} \times \mathcal{A}.
\end{align*}
\end{definition}

\begin{assumption}
    We assume that the state space $\mathcal{S}$ and action space $\mathcal{A}$ are measurable sets with finite measures $S$ and $A$, respectively.
\end{assumption}
\begin{assumption}
    We assume that $||\theta_h||_2\le\sqrt{d}$ and there exists an absolute constant $R>0$ such that
    \begin{align*}
        R^{-2}\cdot \sup_{s'\in\mathcal{S}}|\psi(s,a,s')^Ty|^2\le\int_{s'\in\mathcal{S}}|\psi(s,a,s')^Ty|^2ds'\le d,
    \end{align*}
    for any $(s,a)\in \mathcal{S}\times\mathcal{A}$ and $y\in \mathbb{R}^d$ with $||y||_2\le 1$.
\end{assumption}
\begin{assumption}
    We assume that $||\mu_h||_2\le \sqrt{d}$ and $||\phi(s,a)||_2\le 1, \forall (s,a)\in \mathcal{S} \times \mathcal{A}$ which ensures that $|r^\mu_h(s,a)|\le \sqrt{d}$.
\end{assumption}

\begin{lemma}[Lemma~\ref{le:policy_regret} in linear MDPs]
\label{le:policy_regret_lmdp}
    Let $\sigma=\sqrt{\frac{2\log A}{H^2\sqrt{d}K}}$. With probability at least $1-\delta$, the regret of policy updates in linear MDPs is bounded by
    \begin{align*}
        Regret_{\pi} \le \tilde{O}(\sqrt{H^4Kd^3}).
    \end{align*}
\end{lemma}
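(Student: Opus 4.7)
The plan is to follow the template of the tabular proof of Lemma~\ref{le:policy_regret} (Lemma~4 in \citet{shani2022online}) and lift each step to the linear MDP setting using the machinery of \citet{liu2022learning} and \citet{jin2020provably}. The starting point is the performance difference lemma, which rewrites the cumulative policy regret as
\begin{align*}
\sum_{k=1}^K \bigl[J(\pi,\mu^k) - J(\pi^k,\mu^k)\bigr] \;=\; \sum_{k=1}^K \sum_{h=1}^H \mathbb{E}_{s\sim d_h^{\pi}}\!\left[\langle Q_h^{\pi^k}(s,\cdot),\ \pi_h(\cdot\mid s)-\pi_h^k(\cdot\mid s)\rangle\right].
\end{align*}
I would replace the true $Q$-function by the model-based estimate $\hat Q^{\pi^k}_h$ and split each inner product into a mirror descent term $\langle \hat Q^{\pi^k}_h,\ \pi_h - \pi_h^k\rangle$ and a model-error term $\langle Q^{\pi^k}_h - \hat Q^{\pi^k}_h,\ \pi_h - \pi_h^k\rangle$, then control these two contributions separately.

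For the mirror descent term, the closed-form exponential-weights update in~\eqref{eq:closed_form} yields, pointwise in $s$, the standard online mirror descent regret
\begin{align*}
\sum_{k=1}^K \langle \hat Q_h^{\pi^{k-1}}(s,\cdot),\ \pi_h(\cdot\mid s)-\pi_h^{k-1}(\cdot\mid s)\rangle \;\le\; \frac{\log A}{\sigma} + \frac{\sigma}{2}\sum_{k=1}^K \|\hat Q_h^{\pi^{k-1}}(s,\cdot)\|_\infty^2 .
\end{align*}
In linear MDPs rewards are bounded by $\sqrt{d}$ and the UCB bonus is bounded analogously, so $\|\hat Q_h\|_\infty = O(H\sqrt{d})$. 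Summing over $h\in[H]$ and plugging in the lemma's choice $\sigma=\sqrt{2\log A/(H^2\sqrt{d}K)}$ yields a mirror descent contribution of order $\tilde O(\sqrt{H^4\sqrt{d}K\log A})$, which is dominated by the model-error term below.

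For the model-error contribution, I would invoke the optimistic value iteration framework of \citet{jin2020provably}: with the UCB bonus $b_h(s,a) = \beta\sqrt{\phi(s,a)^\top \Sigma_h^{-1}\phi(s,a)}$ and $\beta = \tilde O(dH)$, a uniform concentration argument over the class of linear value functions (via a standard covering-number bound) guarantees $\hat Q^{\pi^k}_h \ge Q^{\pi^k}_h$ with high probability, and simultaneously $\hat Q^{\pi^k}_h - Q^{\pi^k}_h \le 2\sum_{t\ge h}\mathbb{E}_{\pi^k}[b_t(s_t,a_t)\mid s_h,a_h]$. The elliptical potential lemma then bounds the accumulated bonus $\sum_{k=1}^K\sum_{h=1}^H\mathbb{E}_{\pi^k}[b_h(s_h,a_h)]$ by $\tilde O(\sqrt{d^3 H^4 K})$, which is the claimed rate.

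The hard part will be the optimism step: one must construct a UCB bonus that uniformly dominates the model-estimation error across every $(s,a,h)$ and every iterate $\pi^k$, and then propagate this optimism through the backup recursion defining $\hat Q^\pi_h$. This requires a covering-number argument over the class of value functions induced by the KL-regularized policy iterates, which is more delicate than the greedy-policy case in \citet{jin2020provably} since mirror descent policies depend on the entire history of $\hat Q$ estimates; conditioning/union-bound arguments of the type used in \citet{liu2022learning} should suffice. It is the $\beta = \tilde O(dH)$ bonus width produced by this concentration argument that is ultimately responsible for the $d^3$ scaling in the final regret, with the mirror descent step size $\sigma$ tuned precisely so that the two contributions are in the same order.
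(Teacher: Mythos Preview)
Your sketch is correct and follows the standard template: performance-difference decomposition, an online mirror descent bound on the $\langle \hat Q,\pi-\pi^k\rangle$ term, and an optimism-plus-elliptical-potential argument on the model-error term, with $\beta=\tilde O(dH)$ driving the $d^3$ rate. This is precisely the route taken in the literature; the combination of KL-regularized updates with UCB optimism in linear MDPs is handled in \citet{cai2020provably}, and \citet{liu2022learning} adapt it to the AIL regret decomposition.

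The only point of comparison worth noting is that the paper does not actually prove this lemma: its entire proof is the single line ``Refer to Appendix~H in \citet{liu2022learning}.'' So your proposal is not an alternative to the paper's argument but rather an unpacking of the cited result. One minor remark on your calculation: with the paper's choice $\sigma=\sqrt{2\log A/(H^2\sqrt{d}K)}$ the two mirror-descent terms $\log A/\sigma$ and $\sigma K\|\hat Q\|_\infty^2$ do not balance (the second is larger by a factor $\sqrt{d}$, since $\|\hat Q\|_\infty=O(H\sqrt d)$), so the mirror-descent contribution is $\tilde O(\sqrt{H^4 d^{3/2}K})$ rather than $\tilde O(\sqrt{H^4 d^{1/2}K})$; this is still dominated by the $\tilde O(\sqrt{H^4 d^3 K})$ model-error term, so your conclusion stands. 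The ``delicate'' covering argument you flag is exactly the issue resolved in \citet{cai2020provably}, so you can cite that directly rather than reinventing it.
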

\begin{proof}
    Refer to Appendix H in \citet{liu2022learning}.
\end{proof}

\begin{lemma}[Lemma~\ref{lemma:fhu-MDP} in linear MDPs]
    In finite-horizon undiscounted MDPs, the divergence between state-action visitation distributions is bounded by the total variation distance of the corresponding policies according to
    \begin{align*}
        ||d^{\pi}_h-d^{\pi'}_h||_1=\int_{s,a}\left| d_h^\pi-d_h^{\pi'} \right| dsda\le2\sum_{i=1}^h \mathbb{E}_{s\sim \nu^\pi_i}[\mathbb{D}_{\textnormal{TV}}(\pi_i||\pi'_i)[s]].
    \end{align*}
    \label{lemma:fhu-lmdp}
\end{lemma}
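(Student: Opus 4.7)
The plan is to mirror the proof of Lemma~\ref{lemma:fhu-MDP} for the discrete case, replacing every summation over $\mathcal{S}$ or $\mathcal{A}$ with a Lebesgue integral with respect to the finite measure on these measurable spaces. Since the only discrete-space ingredients in the original proof are (i) the expansion of the state visitation kernel as a policy-weighted transition sum, (ii) Fubini-type interchanges of sums, and (iii) the $L^1$ contraction property of a Markov kernel, each step has a direct continuous analogue.

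First, I would set up the operator $P_\pi$ as the Markov kernel $P_\pi(s' \mid s) = \int_{\mathcal{A}} P_h(s' \mid s,a)\,\pi_h(a \mid s)\,da$ and write the recursion $\nu^{\pi}_h = P_{\pi_{h-1}} \nu^{\pi}_{h-1}$ in integral form. Applying the telescoping trick from the proof of Lemma~\ref{lemma:fhu-MDP},
\begin{align*}
\|\nu^{\pi}_h - \nu^{\pi'}_h\|_1 \le \|(P_{\pi_{h-1}} - P_{\pi'_{h-1}})\nu^{\pi}_{h-1}\|_1 + \|P_{\pi'_{h-1}}(\nu^{\pi}_{h-1} - \nu^{\pi'}_{h-1})\|_1.
\end{align*}
For the first term I would swap the outer integral over $s'$ inside using Tonelli's theorem (all integrands are nonnegative), upper-bound the absolute value of the integral by the integral of the absolute value, and collapse the $s'$ integral via $\int_{\mathcal{S}} P_h(s' \mid s,a)\,ds' = 1$. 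This yields $2\,\mathbb{E}_{s \sim \nu^{\pi}_{h-1}}[\mathbb{D}_{\textnormal{TV}}(\pi_{h-1} \| \pi'_{h-1})[s]]$. For the second term I would use the standard fact that any Markov kernel is a non-expansion in the total-variation / $L^1$ norm on signed measures, which in the continuous case follows from $\int|\int K(s'\mid s)f(s)\,ds|\,ds' \le \int\int K(s'\mid s)|f(s)|\,ds\,ds' = \int|f(s)|\,ds$.

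Second, unrolling the recursion from $h$ down to $h = 1$ (with the base case $\nu^{\pi}_1 = \nu^{\pi'}_1 = \nu_1$, so the $L^1$ difference vanishes) produces $\|\nu^{\pi}_h - \nu^{\pi'}_h\|_1 \le 2 \sum_{i=1}^{h-1} \mathbb{E}_{s \sim \nu^{\pi}_i}[\mathbb{D}_{\textnormal{TV}}(\pi_i \| \pi'_i)[s]]$. Third, I would lift from state to state-action visitation distributions by the same splitting as in the original proof, $d^{\pi}_h(s,a) - d^{\pi'}_h(s,a) = \nu^{\pi}_h(s)(\pi_h(a\mid s) - \pi'_h(a\mid s)) + (\nu^{\pi}_h(s) - \nu^{\pi'}_h(s))\pi'_h(a\mid s)$, integrate $|\cdot|$ over $\mathcal{S}\times\mathcal{A}$, and use $\int_{\mathcal{A}} \pi'_h(a\mid s)\,da = 1$ to obtain the extra $2\,\mathbb{E}_{s \sim \nu^{\pi}_h}[\mathbb{D}_{\textnormal{TV}}(\pi_h \| \pi'_h)[s]]$ summand.

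The step I expect to require the most care is the non-expansion inequality $\|P_{\pi'_{h-1}}(\nu^{\pi}_{h-1} - \nu^{\pi'}_{h-1})\|_1 \le \|\nu^{\pi}_{h-1} - \nu^{\pi'}_{h-1}\|_1$ in the continuous setting, since it implicitly requires measurability of the signed density $\nu^{\pi}_{h-1} - \nu^{\pi'}_{h-1}$ and the validity of Tonelli's theorem for signed integrands (handled by decomposing into positive and negative parts). Everything else is a direct translation of summations into integrals under the finite-measure assumption on $\mathcal{S}$ and $\mathcal{A}$.
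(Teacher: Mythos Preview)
Your proposal is correct and takes essentially the same approach as the paper: the paper's proof simply says to substitute $\sum_{s,a}$ with $\int_{s,a\in\mathcal{S}\times\mathcal{A}}$ and treat $P(s'\mid s,a)$ as a density in the proof of Lemma~\ref{lemma:fhu-MDP}. Your write-up is a more careful version of exactly this substitution, making explicit the Tonelli step and the $L^1$ non-expansion of the Markov kernel that the paper leaves implicit.
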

\begin{proof}
    It can be proved by substituting $\sum_{s,a}$ with $\int_{s,a\in \mathcal{S}\times\mathcal{A}}$ and regarding $P(s'|s,a)$ as a probability density function in the proof of Lemma~\ref{lemma:fhu-MDP}.
\end{proof}

\begin{theorem}[Theorem~\ref{theo:TV-ub} in linear MDPs]
\label{theo:TV-ub-lmdp}
    The total variation distance between two consecutive policies induced by implementing the policy update algorithm in \eqref{eq:policy_update} is bounded by 
    \begin{align*}
        \mathbb{D}_{\textnormal{TV}}(\pi^k_h||\pi^{k-1}_h)[s] \le O(AH\sigma\sqrt{d}),\ \forall s\in\mathcal{S},k\in[K],h\in[H].
    \end{align*}
    When $\sigma=\sqrt{\frac{2\log A}{H^2\sqrt{d}K}}$, the bound becomes $\tilde{O}(\sqrt{\frac{A^2\sqrt{d}}{K}})$.
\end{theorem}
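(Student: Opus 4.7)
The plan is to transcribe the proof of Theorem~\ref{theo:TV-ub} almost verbatim, carefully propagating the linear--MDP reward scale $\sqrt{d}$ in place of the tabular bound $1$ and replacing sums over actions by integrals against the finite measure of mass $A$ on $\mathcal{A}$. First, I would invoke the closed-form mirror-descent update \eqref{eq:closed_form}, which is derived purely from \eqref{eq:policy_update} and therefore carries over unchanged to the linear-MDP setting: $\pi^k_h(a|s)\propto \pi^{k-1}_h(a|s)\exp\{\sigma\,\hat Q^{\pi^{k-1}}_h(s,a)\}$. Writing $T_h(s)=\int_a \pi^{k-1}_h(a|s)\exp\{\sigma\,\hat Q^{\pi^{k-1}}_h(s,a)\}\,da$ for the normalizer, taking logarithms of $\pi^k_h(a|s)/\pi^{k-1}_h(a|s)$ and applying the mean value theorem exactly as in Section~A.2 gives the pointwise bound
\begin{equation*}
    \bigl|\pi^k_h(a|s)-\pi^{k-1}_h(a|s)\bigr|\;\le\;\bigl|\sigma\,\hat Q^{\pi^{k-1}}_h(s,a)-\log T_h(s)\bigr|.
\end{equation*}

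Next, I would update the range bounds used in the tabular argument. Under Assumption~3 we have $|r^\mu_h(s,a)|=|\phi(s,a)^\top\mu_h|\le\|\phi\|_2\,\|\mu_h\|_2\le\sqrt{d}$, so, after the clipping step in the model-based policy evaluation, $\hat Q^{\pi^{k-1}}_h(s,a)\in[0,H\sqrt{d}]$ (inheriting the UCB-bonus scaling used in the linear-MDP version of \citet{liu2022learning}). Since $T_h(s)$ is an expectation (not a sum) of $\exp\{\sigma\,\hat Q\}$ under a probability density, we obtain $1\le T_h(s)\le\exp\{\sigma H\sqrt{d}\}$ and hence $|\log T_h(s)|\le \sigma H\sqrt{d}$, without picking up a spurious $\log A$ factor from the measure of $\mathcal{A}$.

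Integrating the pointwise inequality over $a\in\mathcal{A}$ (total mass $A$) yields
\begin{equation*}
    \int_a\bigl|\pi^k_h(a|s)-\pi^{k-1}_h(a|s)\bigr|\,da\;\le\;\sigma\int_a\bigl|\hat Q^{\pi^{k-1}}_h(s,a)\bigr|\,da + \int_a\bigl|\log T_h(s)\bigr|\,da\;\le\;2AH\sigma\sqrt{d},
\end{equation*}
so $\mathbb{D}_{\textnormal{TV}}(\pi^k_h\|\pi^{k-1}_h)[s]\le O(AH\sigma\sqrt{d})$. Substituting the prescribed step size $\sigma=\sqrt{2\log A/(H^2\sqrt{d}K)}$ collapses this to $\tilde O\!\bigl(\sqrt{A^2\sqrt{d}/K}\bigr)$, as claimed.

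The only non-routine aspect is step two: one must verify that the clipping constant and UCB bonus $b_h$ chosen in the linear-MDP instantiation of the model-based evaluator keep $\hat Q^{\pi^{k-1}}_h$ within $[0,H\sqrt{d}]$ (up to logarithmic factors hidden in $\tilde O$), rather than the $[0,H-h+1]$ bound valid when rewards lie in $[0,1]$. Once this bookkeeping is in place—conveniently, it is already done inside the policy-regret proof underlying Lemma~\ref{le:policy_regret_lmdp}—the remaining arithmetic is identical to the tabular case and the rest of the argument goes through line by line.
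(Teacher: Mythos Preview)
Your proposal is correct and follows essentially the same route as the paper's own proof: define the normalizer $T_h(s)$ as an integral, use the closed-form update and the mean-value-theorem step from the tabular argument to get the pointwise bound, invoke $\hat Q_h^{\pi^{k-1}}\in[0,H\sqrt{d}]$ and $1\le T_h(s)\le e^{\sigma H\sqrt{d}}$, then integrate over $a$ against the measure of mass $A$. Your extra remarks about the clipping level and the fact that $T_h$ is a probability-weighted average (so no stray $\log A$ appears) are more explicit than the paper but change nothing in substance.
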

\begin{proof}
Let $T_h(s) = \int_{a\in\mathcal{A}} \pi_h^{k-1}(a|s)\cdot\exp\{\sigma\cdot\hat{Q}_h^{\pi^{k-1}}(s,a)\}da$. Based on the policy update in \eqref{eq:closed_form}, we have
\begin{equation*}
    \frac{\pi_h^k(a|s)}{\pi_h^{k-1}(a|s)}=\frac{e^{\sigma\cdot\hat{Q}_h^{\pi^{k-1}}(s,a)}}{T_h(s)},\ \forall s\in \mathcal{S}.
\end{equation*}
Then, as in the proof of Theorem~\ref{theo:TV-ub}, we have
\begin{equation*}
    \left| \pi_h^k(a|s)-\pi_h^{k-1}(a|s) \right| \le \left| \sigma\cdot\hat{Q}_h^{\pi^{k-1}}(s,a)-\log(T_h(s)) \right|. 
\end{equation*}
Due to the boundedness of rewards, we have $0 \le Q_h^{\pi^{k-1}}(s,a) \le H\sqrt{d}$ and $1 \leq T_h(s) \le\exp\{\sigma H\sqrt{d}\}$. Together with the above inequality, we have
\begin{equation*}
    \int_{a\in\mathcal{A}}|\pi_h^k(a|s)-\pi_h^{k-1}(a|s)|da\le \sigma\int_{a\in\mathcal{A}} |Q_h^{\pi^{k-1}}(s,a)|da + \int_{a\in\mathcal{A}} \left| \log T_h(s) \right|da \le 2A\sigma H\sqrt{d}. 
\end{equation*}
Hence, we have $\mathbb{D}_{\textnormal{TV}}(\pi_h^{k}||\pi_h^{k-1})[s]=\frac{1}{2}\int_{a\in\mathcal{A}}|\pi_h^k(a|s)-\pi_h^{k-1}(a|s)|da\le O(AH\sigma\sqrt{d})$. When $\sigma=\sqrt{\frac{2\log A}{H^2\sqrt{d}K}}$, $\mathbb{D}_{\textnormal{TV}}(\pi_h^k||\pi_h^{k-1})[s]\le \tilde{O}(\sqrt{\frac{A^2\sqrt{d}}{K}})$.
\end{proof}

\begin{theorem}[Theorem~\ref{theo:reward_regret} in linear MDPs]
Let $\eta=\sqrt{\frac{d}{K}}$ and $\sigma=\sqrt{\frac{2\log A}{H^2\sqrt{d}K}}$. Then, with probability at least $1-\delta$, the
regret of the reward update problem is bounded by
\begin{align*}
    Regret_\mu &= \sup_{\mu\in \mathcal{R}}\sum_{k=1}^KL(\pi^k,\mu)-L(\pi^k,\mu^k) \\ 
    &\le \tilde{O}\big(\underbrace{\sqrt{H^2dK}}_{\textit{regret of reward updates}} +\underbrace{\sqrt{\frac{H^4A^2(\sum_k N(k)-1)^2d^{3/2}}{K}}}_{\textit{error induced by off-policy updates}} +\underbrace{\sqrt{\frac{H^3d^2K^2}{B\sum_k N(k)}}}_{\textit{estimation error}}\big),
\end{align*}
where $B$ is the number of new trajectories collected with the current policy between updates. Specifically, when $N$ is fixed, we have
\begin{align*}
    Regret_\mu \le \tilde{O}\big(\underbrace{\sqrt{H^2dK}}_{\textit{regret of reward updates}} +\underbrace{\sqrt{H^4A^2(N-1)^2Kd^{3/2}}}_{\textit{error induced by off-policy updates}} +\underbrace{\sqrt{\frac{H^3d^2K}{BN}}}_{\textit{estimation error}}\big).
\end{align*}
\label{theo:reward_regret_lmdp}
\end{theorem}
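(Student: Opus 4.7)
The plan is to mirror the four-term decomposition used in the proof of the tabular Theorem~\ref{theo:reward_regret}, substituting each tabular building block with its linear MDP counterpart. Starting from the empirical projected ascent update $\mu^{k+1}_h = \mathrm{Proj}_{\mu\in\mathcal{R}}\{\mu^k_h + \eta\hat{\nabla}_\mu L(\pi^{k,mix},\mu^k)\}$, the first-order projection inequality combined with the linearity of $L$ in $\mu$ will yield the same four pieces: (i) a telescoping Euclidean-distance term, (ii) a gradient-norm term $\sum_{k,h}(\mu_h^{k+1}-\mu_h^k)^T\hat{\nabla}_{\mu_h}L(\pi^{k,mix},\mu^k)$, (iii) an off-policy bias term comparing $\nabla L(\pi^{k,mix},\mu^k)$ with $\nabla L(\pi^k,\mu^k)$, and (iv) an empirical estimation term. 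The key representational change is that in linear MDPs every norm on $\mu$ becomes an $\ell_2$ norm of a $d$-dimensional vector with $\|\mu_h\|_2\le\sqrt{d}$, and the gradient takes the form $\nabla_{\mu_h}L(\pi,\mu)=\mathbb{E}_{d^{\pi^E}_h}[\phi]-\mathbb{E}_{d^{\pi}_h}[\phi]$, whose norm is controlled by $\|\phi\|_2\le 1$.

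For term (i), the budget $\|\mu_h\|_2^2\le d$ replaces the tabular $SA$ factor and gives $O(Hd/\eta)$; for term (ii), since $\|\hat{\nabla}_{\mu_h}L\|_2\le 2$ and $\|\mu_h^{k+1}-\mu_h^k\|_2\le\eta\|\hat{\nabla}_{\mu_h}L\|_2$, the contribution is $O(\eta HK)$. With $\eta=\sqrt{d/K}$, these combine to $\tilde{O}(\sqrt{H^2dK})$, matching the first term of the claim.

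For term (iii), I plan to use Cauchy--Schwarz in $\ell_2$ rather than the H{\"o}lder pairing used in the tabular proof, exploiting the fact that $\|\mathbb{E}_{d^{\pi^k}_h}[\phi]-\mathbb{E}_{d^{\pi^{k,mix}}_h}[\phi]\|_2 \le \|d^{\pi^k}_h-d^{\pi^{k,mix}}_h\|_1$ because $\|\phi\|_2\le 1$. After the same mixture-telescoping step that produced \eqref{eq:A.3.10} in the tabular case, each consecutive occupancy gap $\|d^{\pi^{k+1-n}}_h-d^{\pi^{k-n}}_h\|_1$ is bounded via Lemma~\ref{lemma:fhu-lmdp} combined with Theorem~\ref{theo:TV-ub-lmdp}, whose TV bound carries an extra $\sqrt{d}$ relative to the tabular version. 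Pairing this with $\|\mu_h^k-\mu_h\|_2\le 2\sqrt{d}$ through Cauchy--Schwarz is precisely where the $d^{3/2}$ inside $\sqrt{H^4A^2(N-1)^2Kd^{3/2}}$ originates.

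For term (iv), I would invoke the linear MDP analogue of Lemma H.8 of \citet{liu2022learning} applied to the single policy $\pi^{k,mix}$ guaranteed by Corollary~\ref{cor:pimix}, which yields $\tilde{O}(\sqrt{H^3d^2K/(BN)})$ since $BN$ samples are averaged per iteration. The main obstacle I anticipate is checking that the mixture-telescoping used in step (iii) and the subsequent application of Lemma~\ref{lemma:fhu-lmdp} pass through measurable (possibly continuous) $\mathcal{S}$ and $\mathcal{A}$ without picking up an additional cardinality-dependent slack; this requires consistently interpreting summations as integrals, as already done in the proofs of Lemma~\ref{lemma:fhu-lmdp} and Theorem~\ref{theo:TV-ub-lmdp}. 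A final assembly with $\eta=\sqrt{d/K}$ and $\sigma=\sqrt{2\log A/(H^2\sqrt{d}K)}$ then produces the stated regret bound.
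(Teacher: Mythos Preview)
Your proposal is correct and follows essentially the same four-term decomposition and bounding strategy as the paper. The only cosmetic difference is in term (iii): the paper bounds $(\mu_h^k-\mu_h)^T\int(d_h^{\pi^k}-d_h^{\pi^{k,mix}})\phi\,ds\,da$ via the sup--$L^1$ pairing $\sup_{s,a}|\phi(s,a)^T(\mu_h^k-\mu_h)|\cdot\|d_h^{\pi^k}-d_h^{\pi^{k,mix}}\|_1\le 2\sqrt{d}\,\|d_h^{\pi^k}-d_h^{\pi^{k,mix}}\|_1$, whereas you apply Cauchy--Schwarz in $\mathbb{R}^d$ and then $\|\mathbb{E}_{d_h^{\pi^k}}[\phi]-\mathbb{E}_{d_h^{\pi^{k,mix}}}[\phi]\|_2\le\|d_h^{\pi^k}-d_h^{\pi^{k,mix}}\|_1$; both routes produce the identical $2\sqrt{d}\cdot\|d_h^{\pi^k}-d_h^{\pi^{k,mix}}\|_1$ factor and hence the same $\sqrt{H^4A^2(N-1)^2Kd^{3/2}}$ contribution.
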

\begin{proof}
In linear MDPs, we can bound the regret of reward updates in the same way as the tabular setting. As in the proof of Theorem~\ref{theo:reward_regret}, we have
\begin{align}
    Regret_\mu=&\sup_{\mu\in\mathcal{R}} \sum_{k=1}^K L(\pi^k,\mu)-L(\pi^k,\mu^k) \nonumber \\
    \le &\sup_{\mu\in\mathcal{R}}\sum_{k=1}^K\sum_{h=1}^H\frac{1}{2\eta}(||\mu_h^{k}-\mu_h||_2^2-||\mu_h^{k+1}-\mu_h||_2^2-||\mu_h^{k+1}-\mu_h^{k}||_2^2) \label{eq:lmdp_reward_regret_1}  \\
    &+\sum_{k=1}^K\sum_{h=1}^H(\mu_h^{k+1}-\mu_h^{k})^T\hat{\nabla}_{\mu_h}L(\pi^{k,mix},\mu^{k}) \label{eq:lmdp_reward_regret_2}\\
    &+\sup_{\mu\in\mathcal{R}} \sum_{k=1}^K\sum^{H}_{h=1}(\mu^{k}_h-\mu_h)^T(\nabla_{\mu_h}L(\pi^{k,mix},\mu^{k})-\nabla_{\mu_h}L(\pi^{k},\mu^{k})) \label{eq:lmdp_reward_regret_3} \\
    &+\sup_{\mu\in\mathcal{R}}\sum_{k=1}^K\sum_{h=1}^H(\mu^{k}_h-\mu_h)^T(\hat{\nabla}_{\mu_h}L(\pi^{k,mix},\mu^{k})-\nabla_{\mu_h}L(\pi^{k,mix},\mu^{k})). \label{eq:lmdp_reward_regret_4} 
\end{align}
Next, we bound each of these terms individually. We can bound \eqref{eq:lmdp_reward_regret_1} by
\begin{equation*}
    \sup_{\mu\in\mathcal{R}}\sum_{k=1}^K\sum_{h=1}^H\frac{1}{2\eta}(||\mu_h^{k}-\mu_h||_2^2-||\mu_h^{k+1}-\mu_h||_2^2-||\mu_h^{k+1}-\mu_h^{k}||_2^2) \le \sup_{\mu\in\mathcal{R}}\frac{1}{2\eta}\sum_{h=1}^H||\mu_h^1-\mu||_2^2 \le \frac{2}{\eta}Hd,
\end{equation*}
where the last inequality holds due to the assumption $||\mu_h||_2\le\sqrt{d}$.

We can bound \eqref{eq:lmdp_reward_regret_2} by
\begin{equation*}
    \sum_{k=1}^K\sum_{h=1}^H(\mu_h^{k+1}-\mu_h^{k})^T\hat{\nabla}_{\mu_h}L(\pi^{k,mix},\mu^{k}) \le \sum_{k=1}^K\sum_{h=1}^H \eta||\hat{\nabla}_{\mu_h}L(\pi^{k,mix},\mu^{k})||_2^2 \le 4\eta HK,
\end{equation*}
where the first inequality was shown in the proof of Theorem~\ref{theo:reward_regret} and the second inequality holds because $||\hat{\nabla}_{\mu_h}L(\pi^{k,mix},\mu^{k})||_2\le2||\phi(\cdot,\cdot)||_2\le 2$.

In order to bound \eqref{eq:lmdp_reward_regret_3}, note that
\begin{equation*}
    J(\pi,\mu)=\sum_{h=1}^H\int_{s,a}d_h^\pi(s,a)\phi(s,a)^T\mu_hdsda.
\end{equation*}
By definition, we have
\begin{align*}
    \nabla_{\mu_h}L(\pi^{k,mix},\mu^k)-\nabla_{\mu_h}L(\pi^k, \mu^k)=\int_{s,a}\left(d_h^{\pi^k}(s,a)-\frac{1}{N(k)}\sum_{n=1}^{N(k)} d_h^{\pi^{k+1-n}}(s,a)\right)\phi(s,a)dsda.
\end{align*}

Based on Theorem~\ref{theo:TV-ub} and Lemma~\ref{lemma:fhu-MDP} in linear MDPs, we have
\begin{align*}
    &\sup_{\mu\in \mathcal{R}}\sum_{k=1}^K\sum_{h=1}^H (\mu^k_h-\mu_h)^T(\nabla_{\mu_h}L(\pi^{k,mix},\mu^k)-\nabla_{\mu_h}L(\pi^k,\mu^k)) \\
    &\qquad \qquad \qquad \le \sup_{\mu\in \mathcal{R}}\sum_{k=1}^K\sum_{h=1}^H \sup_{s,a}|\phi(s,a)^T(\mu_h^k-\mu_h)| \cdot ||\frac{1}{N(k)}\sum_{n=1}^{N(k)}(d_h^{\pi^k}-d_h^{\pi^{k+1-n}})||_1 \\ 
    &\qquad \qquad \qquad \le \sup_{\mu\in \mathcal{R}}\sum_{k=1}^K\sum_{h=1}^H 2\sqrt{d}||\frac{1}{N(k)}\sum_{n=1}^{N(k)}(d_h^{\pi^k}-d_h^{\pi^{k+1-n}})||_1 \\
    &\qquad \qquad \qquad \le \tilde{O}(\sqrt{\frac{H^4A^2(\sum_k N(k)-1)^2d^{3/2}}{K}}),
\end{align*}
where the last inequality follows from
\begin{align*}
    ||\frac{1}{N(k)}\sum_{n=1}^{N(k)}(d_h^{\pi_k}-d_h^{\pi_{k+1-n}})||_1 &\le  \frac{1}{N(k)}\sum_{n=1}^{N(k)-1}(N(k)-n)\cdot||d^{\pi^{k+1-n}}_h-d^{\pi^{k-n}}_h||_1  \\
    &\le \tilde{O}\left(h(N(k)-1)\sqrt{\frac{A^2\sqrt{d}}{K}}\right).
\end{align*}

Same as the tabular case, in order to bound \eqref{eq:lmdp_reward_regret_4}, note that $\pi^{k,mix}$ can be regarded as a single policy based on Corollary~\ref{cor:pimix}. Therefore, we can bound \eqref{eq:lmdp_reward_regret_4} in the same way as in \citet{liu2022learning} (Lemma H.8) with high-probability Azuma-Hoeffding inequality. By applying this result, with probability at least $1-\delta$ we have that
\begin{align*}
    &\sup_{\mu\in \mathcal{R}}\sum_{k=1}^K\sum_{h=1}^H(\mu^k_h-\mu_h)^T(\hat{\nabla}_{\mu_h}L(\pi^{k, mix},\mu^k)-\nabla_{\mu_h}L(\pi^{k, mix},\mu^k)) \\ & \qquad =\sup_{\mu\in \mathcal{R}}\sum_{k=1}^K\sum_{h=1}^H(\mu^k_h-\mu_h)^T\int_{s,a}\left(d_h^{\pi^{k, mix}}(s,a)-\hat d_h^{\pi^{k,mix}}(s,a)\right)\phi(s,a)dsda \\ &\qquad \le \tilde{O}\left(\sqrt{\frac{H^3d^2K^2}{B\sum_k N(k)}}\right).
\end{align*}

By combining the upper bounds of \eqref{eq:lmdp_reward_regret_1}, \eqref{eq:lmdp_reward_regret_2}, \eqref{eq:lmdp_reward_regret_3}, and \eqref{eq:lmdp_reward_regret_4} and setting $\eta=\sqrt{\frac{d}{K}}$, with probability at least $1-\delta$ the regret of reward updates in linear MDPs is bounded by
\begin{align*}
    Regret_\mu &= \sup_{\mu\in \mathcal{R}}\sum_{k=1}^KL(\pi^k,\mu)-L(\pi^k,\mu^k) \\
    &\le \tilde{O}\big(\underbrace{\sqrt{H^2dK}}_{\textit{regret of reward updates}} +\underbrace{\sqrt{\frac{H^4A^2(\sum_k N(k)-1)^2d^{3/2}}{K}}}_{\textit{error induced by off-policy updates}} +\underbrace{\sqrt{\frac{H^3d^2K^2}{B\sum_k N(K)}}}_{\textit{estimation error}} \big).
\end{align*}
Specifically, when $N$ is fixed, we have
\begin{align*}
    Regret_\mu \le \tilde{O}\big(\underbrace{\sqrt{H^2dK}}_{\textit{regret of reward updates}} +\underbrace{\sqrt{H^4A^2(N-1)^2Kd^{3/2}}}_{\textit{error induced by off-policy updates}} +\underbrace{\sqrt{\frac{H^3d^2K}{BN}}}_{\textit{estimation error}} \big).
\end{align*}
\end{proof}

\section{Implementation Details}
\subsection{MiniGrid Environments}
\label{sec:minigrid_details}
For MiniGrid EmptyRoom tasks, we exactly follow the algorithm introduced in the main paper which consists of policy updates given by \eqref{eq:closed_form} and reward updates given by \eqref{eq:reward_update}. The only difference is that we do not learn the transition function for policy updates. Instead, we assume the transition function is known when we estimate Q-values, since it only impacts the regret of policy updates which is not the focus of this work. We make a small modification to the MiniGrid action set by changing it from $\{\text{stay, go forward, turn left, turn right}\}$ to $\{\text{stay, up, down, left, right}\}$.

We consider the learning rates $\sigma=10\sqrt{\frac{2\log(4)}{H^2K}}$ and $\eta=\frac{5}{\sqrt{K}}$ for policy updates and reward updates, respectively. We train each algorithm for $10{,}000$ environment interactions. We set the horizon $H$ of an episode to $3n$, where $n\times n$ is the size of the room. We maintain a replay buffer with size of $128$, and we sample $32$ data points from the replay buffer when we conduct reward updates. The experiments were run with two A5000 GPUs (24G memory) and it took approximately 5 minutes for each environment and each seed.

\subsection{MuJoCo Benchmarks}
\label{sec:mujoco_details}
For continuous space MuJoCo locomotion tasks, we need to make some modifications to our algorithm. Specifically, for policy updates, we use off-policy mirror descent policy optimization (MDPO) algorithm introduced by \citet{tomar2020mirror}. With this modification and by taking multiple gradient steps (which we set as $5$ per iteration), we achieve a deep variant of the policy update in \eqref{eq:policy_update}. We parameterize the actor and critic with neural networks, using an MLP architecture with $2$ hidden layers and $256$ hidden units. We use Adam optimizer with learning rate $10^{-3}$, $10^{-5}$ for actor and critic, respectively. The regularization coefficient we use is $0.1$.

For reward updates, we use the discriminator introduced in \citet{ho2016generative} with the modification that the data comes from multiple past policies. Although this is a different loss compared to the loss we use for theoretical analysis, the only difference is an additional regularization which can be regarded as a technique for practical performance. The learning rate of the discriminator is $10^{-3}$. We train each algorithm for $100{,}000$ environment interactions. We maintain a replay buffer with size of $2{,}048$, and we sample $256$ data points from the replay buffer when we conduct reward updates. The experiments were run with two A5000 GPUs (24G memory) and it took approximately 5 hours for each environment and each seed.

\newpage
\section{Additional Experimental Results}
In Figure~\ref{fig:minigrid}, we showed results using model-based policy updates. Here, we explore an alternative algorithm family based on policy gradient updates on the MiniGrid environment. The results are presented in Figure~\ref{fig:minigrid_pg}.

Across the EmptyRoom environments with all sizes, the off-policy versions ($N>1$) exhibit faster convergence and superior final performance compared to the on-policy version, consistent with our theoretical findings. Besides, in EmptyRoom-$5\times5$, $N=32$ outperforms $N=128$, whereas in EmptyRoom-$9\times 9$, $N=128$ achieves nearly the same performance as $N=32$. This observation also aligns with our theoretical results: as the state space of the environment grows, the optimal number of recent policies to use for reward updates shifts upward.

\begin{figure*}[htb]
\vskip 0.2in
  \begin{center}
  \subfigure[EmptyRoom-3$\times$3]{\includegraphics[width=0.32\textwidth]{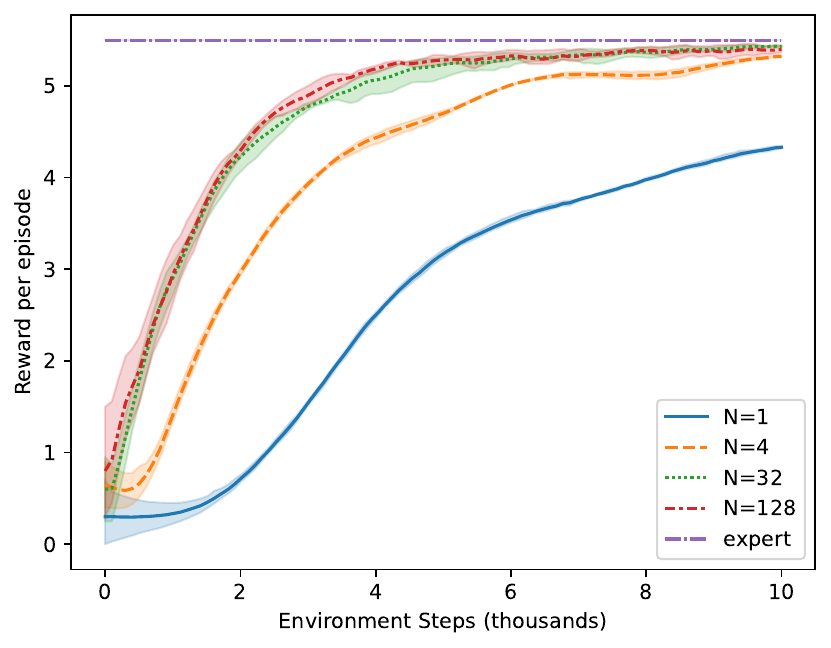}\label{fig:maze_3}}
  \subfigure[EmptyRoom-5$\times$5]{\includegraphics[width=0.32\textwidth]{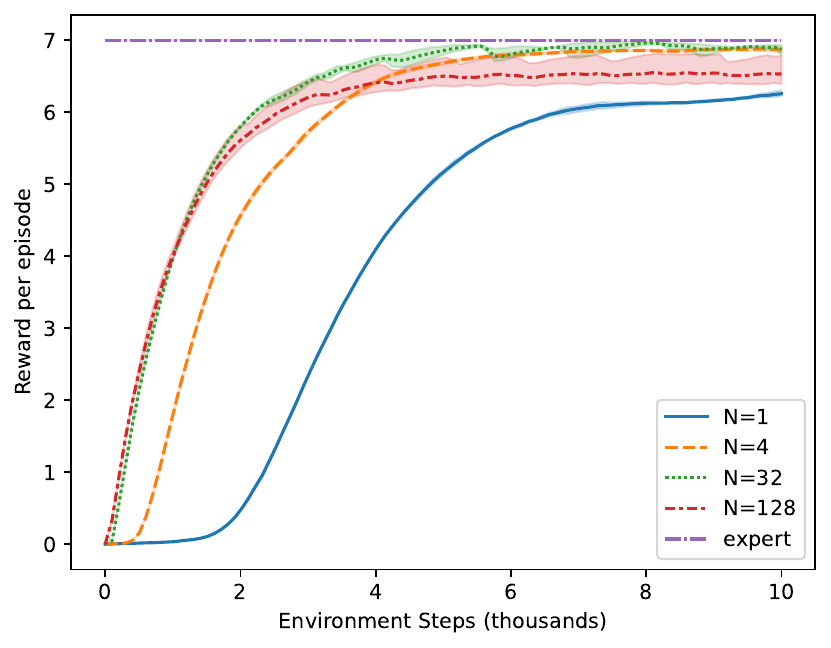}\label{fig:maze_5}}
  \subfigure[EmptyRoom-9$\times$9]{\includegraphics[width=0.32\textwidth]{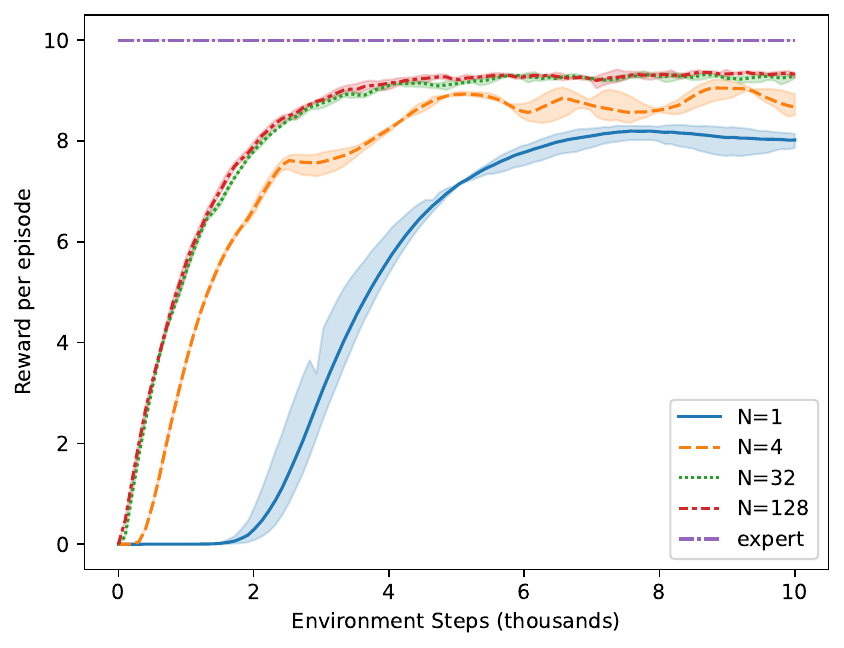}\label{fig:maze_9}}
  \caption{Experimental results for off-policy AIL algorithm with policy-gradient updates for different $N$ in three MiniGrid EmptyRoom tasks with room sizes equal to $3\times 3$, $5\times 5$, and $9\times 9$, respectively, from left to right. Training curves represent total reward per episode as a function of environment interactions. We evaluate the learned policy using average performance over $5$ episodes. $N$ denotes the number of most recent policies we consider during reward updates, where $N = 1$ represents the on-policy algorithm. The expert’s
  demonstration consists of $4$ trajectories which are hand-crafted. We run each experiment for $5$ different seeds and the shading represents the standard deviation.}
  \label{fig:minigrid_pg}
  \end{center}
  \vskip -0.2in
\end{figure*}

\end{document}